\newcolumntype{R}{>{$}r<{$}}
\newcolumntype{C}{>{$}c<{$}}
\definecolor{citeblue}{rgb}{0.21,0.49,0.74}
\newcommand{\vertical}[1]{\multicolumn{1}{c}{\rotatebox{90}{#1}}}
\newcommand{\vct}[1]{\boldsymbol{#1}} %
\newcommand{\mat}[1]{\boldsymbol{#1}} %
\newcommand{\ProbOpr}[1]{\mathbb{#1}}
\newcommand{\expect}[2]{%
\ifthenelse{\equal{#2}{}}{\ProbOpr{E}_{#1}}
{\ifthenelse{\equal{#1}{}}{\ProbOpr{E}\left[#2\right]}{\ProbOpr{E}_{#1}\left[#2\right]}}} %
\newcommand{\var}[2]{%
\ifthenelse{\equal{#2}{}}{\ProbOpr{VAR}_{#1}}
{\ifthenelse{\equal{#1}{}}{\ProbOpr{VAR}\left[#2\right]}{\ProbOpr{VAR}_{#1}\left[#2\right]}}} %
\newtheorem{theorem}{Theorem}[section]
\newcommand{\vdelta}{\vct{\delta}}
\newcommand{\vmu}{\vct{\mu}}
\newcommand{\vc}{\vct{c}}
\newcommand{\vm}{\vct{m}}
\newcommand{\vd}{\vct{d}}
\newcommand{\vz}{{\vct{z}}}
\newcommand{\mU}{\mat{U}}
\newcommand{\mD}{\mat{D}}
\newcommand{\method}[1]{\textsc{#1}}
\newcommand{\eat}[1]{}
\DeclareRobustCommand\onedot{\futurelet\@let@token\@onedot}
\def\@onedot{\ifx\@let@token.\else.\null\fi\xspace}
\def\eg{\emph{e.g}\onedot} 
\def\ie{\emph{i.e}\onedot} 
\def\etc{\emph{etc}\onedot} \def\vs{\emph{vs}\onedot}
\def\etal{\emph{et al}\onedot}
\newcommand{\Ours}{\method{BioCLIP 2}\xspace}
\newcommand{\OursP}{\method{BioCLIP}\xspace}
\newcommand{\OursD}{\method{TreeOfLife-200M}\xspace}
\newcommand{\OursDS}{\method{ToL-200M}\xspace}
\newcommand{\OursDP}{\method{TreeOfLife-10M}\xspace}
\newcommand{\OursDPS}{\method{ToL-10M}\xspace}
\newcommand\mypara[1]{\vspace{1.2mm}\noindent\textbf{#1}}
\definecolor{cssgreen}{rgb}{0.0, 0.5, 0.0}
\newcommand\extrafootertext[1]{%
    \bgroup
    \renewcommand\thefootnote{\fnsymbol{footnote}}%
    \renewcommand\thempfootnote{\fnsymbol{mpfootnote}}%
    \footnotetext{#1}%
    \egroup
}
\setlist{leftmargin=18pt,nosep}
\title{\Ours: Emergent Properties \\from Scaling Hierarchical Contrastive Learning}
\author{%
  Jianyang Gu$^{1\dagger}$, Samuel Stevens$^1$, Elizabeth G Campolongo$^1$, Matthew J Thompson$^1$,\\
  {\bf Net Zhang$^1$, Jiaman Wu$^1$, Andrei Kopanev$^1$, Zheda Mai$^1$, Alexander E. White$^2$,}\\
  {\bf James Balhoff$^3$, Wasila Dahdul$^4$, Daniel Rubenstein$^5$, Hilmar Lapp$^6$,}\\ 
  {\bf Tanya Berger-Wolf$^1$, Wei-Lun Chao$^1$, Yu Su$^{1\dagger}$} \\
  $^1$The Ohio State University, $^2$Smithsonian Institution, $^3$UNC Chapel Hill,\\
  $^4$University of California, Irvine, $^5$Princeton University, $^6$Duke University
  % examples of more authors
  % \And
  % Coauthor \\
  % Affiliation \\
  % Address \\
  % \texttt{email} \\
  % \AND
  % Coauthor \\
  % Affiliation \\
  % Address \\
  % \texttt{email} \\
  % \And
  % Coauthor \\
  % Affiliation \\
  % Address \\
  % \texttt{email} \\
  % \And
  % Coauthor \\
  % Affiliation \\
  % Address \\
  % \texttt{email} \\
}
\begin{document}

\maketitle

\extrafootertext{
Models, data, and code available at \href{https://imageomics.github.io/bioclip-2}{imageomics.github.io/bioclip-2}. $^\dagger$\texttt{\{gu.1220, su.809\}@osu.edu}
}

\vspace{-10pt}
\begin{figure}[!h]
\centering
\includegraphics[width=\textwidth]{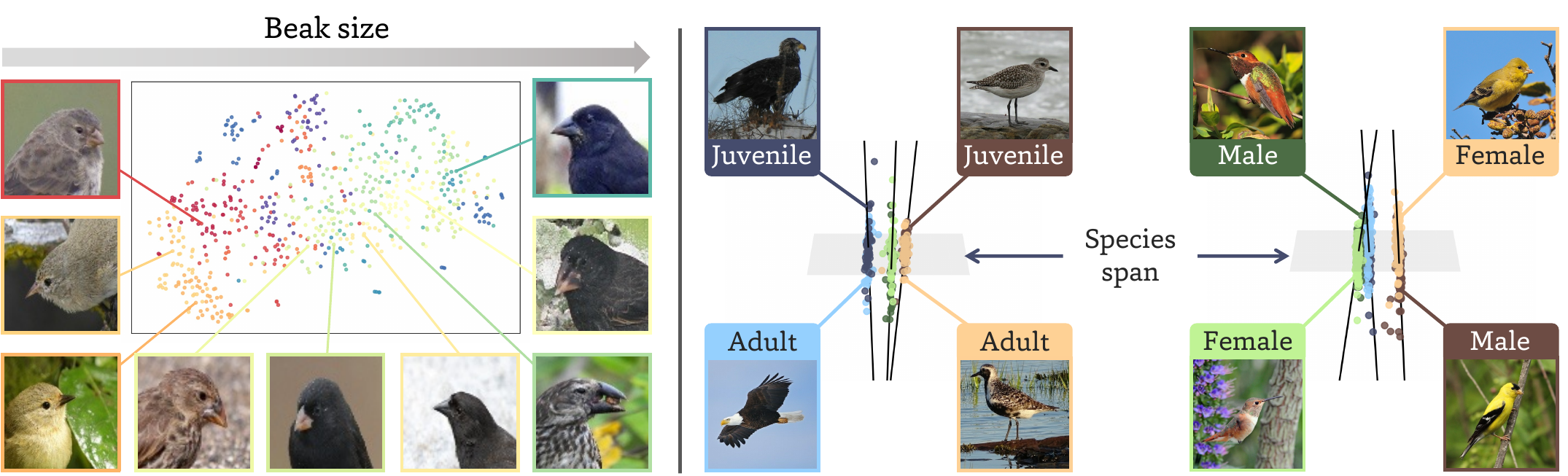}
\caption{
\small While \Ours is trained to distinguish species, it demonstrates emergent properties beyond the initial training objective. \textbf{Left}: At the \textit{inter-species} level, the embedding distribution of different species aligns with ecological relationships; the embeddings of Darwin's finches arrange themselves by beak size from left to right. \textbf{Right}: Instead of collapsing, the \textit{intra-species} variations are preserved in subspaces orthogonal to the inter-species variation (the black lines point from the mean embedding of one variant to that of the other variant). Orthogonality increases with scale (see \autoref{fig:scale-fdr}).}
\label{fig:teaser}
% \vspace{10pt}
\end{figure}

\begin{abstract}
Foundation models trained at scale exhibit remarkable emergent behaviors, learning new capabilities beyond their initial training objectives. 
We find such emergent behaviors in biological vision models via large-scale contrastive vision-language training. 
To achieve this, we first curate \OursD, comprising $214$ million images of living organisms, the largest and most diverse biological organism image dataset to date. 
We then train \Ours on \OursD to distinguish different species.
Despite the narrow training objective, \Ours yields extraordinary accuracy when applied to various biological visual tasks such as habitat classification and trait prediction. 
We identify emergent properties in the learned embedding space of \Ours. 
At the inter-species level, the embedding distribution of different species aligns closely with functional and ecological meanings (\eg, beak sizes and habitats). 
At the intra-species level, instead of being diminished, the intra-species variations (\eg, life stages and sexes) are preserved and better separated in subspaces orthogonal to inter-species distinctions. We provide formal proof and analyses to explain why hierarchical supervision and contrastive objectives encourage these emergent properties. Crucially, our results reveal that these properties become increasingly significant with larger-scale training data, leading to a biologically meaningful embedding space.
\end{abstract}
\section{Introduction}
\label{sec:intro}

Recent advances in artificial intelligence (AI) are transforming core scientific workflows to become more efficient and automated~\cite{wang2023scientific,xu2021artificial}. Tasks that once demanded overwhelming time and labor, like inferring atomic protein folds, designing functional materials, or producing global weather forecasts, can now be efficiently accomplished by large-scale predictive models~\cite{jumper2021highly,lin2023evolutionary,merchant2023scaling,tran2024design,lam2023learning,bi2023accurate}. Particularly, a growing class of \textit{domain-specific foundation models} demonstrate capabilities that arise without explicit definition during training~\cite{wei2022emergent,boiko2023autonomous}.
For example, language models trained purely on large-scale amino-acid strings unexpectedly develop an understanding of 3D chemistry to predict atomic folds with near-experimental accuracy~\cite{lin2023evolutionary,rives2021biological}. 
These scale-driven emergent abilities are reshaping scientific inference and opening new avenues for data-centric discovery. 

In ecology and evolutionary biology, previous efforts leveraged hierarchical taxonomic labels and CLIP-style contrastive training \cite{radford2021learning} to achieve pronounced species classification accuracy across the tree of life~\cite{stevens2024bioclip,yang2024biotrove,padial2010integrative}.
This work asks a simple but intriguing question: \textit{what properties emerge if we scale up hierarchical contrastive training?} 
To answer this question, we curate \href{https://huggingface.co/datasets/imageomics/TreeOfLife-200M}{\OursD}, comprising $214$M organism images spanning $952$K taxonomic classes, making it the largest and most diverse visual catalog of life to date.
Through training at scale, our model \Ours improves species classification accuracy by $18.0\%$ over \OursP~\cite{stevens2024bioclip}.
More importantly, we explore whether representations learned solely through species-level supervision can generalize to diverse biological questions \textbf{beyond species classification}.

To probe these capabilities, we evaluate \Ours on a variety of existing biological visual tasks, including habitat classification \cite{khan2023fishnet}, trait prediction \cite{xian2018zero,van2021benchmarking}, new-species identification \cite{herbarium-2019-fgvc6}, and agricultural disease detection \cite{singh2020plantdoc}.
These applications push beyond simple species recognition and apply directly to biodiversity conservation, trait organization, and agricultural health. 
Despite being trained primarily with species-level supervision, \Ours outperforms both vision-language (\eg, SigLIP~\cite{zhai2023sigmoid}) and vision-only baselines (\eg, DINOv3~\cite{simeoni2025dinov3}) by an average margin of $10.3\%$ on these tasks. 
We then look deeper into the embedding space of \Ours and identify two \textit{emergent properties} as the training scales up. 

At the \textbf{inter-species} level, the embedding distribution of different species aligns with their ecological relationships.
As shown on the left side of \autoref{fig:teaser}, \Ours embeddings of Darwin's finches demonstrate an increasing beak size from left to right, which is not observed in the original CLIP embedding space.
We attribute the property to the adopted hierarchical taxonomic labels, which inherently encode functional and ecological information~\cite{lamichhaney2015evolution}. 
The hierarchical supervision at scale pushes related species to co-locate in functionally coherent ``macro-clusters.''
In such a way, \Ours acquires functional trait knowledge without using explicitly labeled traits. 

At the \textbf{intra-species} level, contrary to the intuition that fine-grained differences collapse after extensive training~\cite{zhou2022all,lu2022neural}, \Ours keeps the intra-species variations (\eg, life stages and sexes) distinct. 
On the right side of \autoref{fig:teaser}, three species form tight clusters when projected onto the ``species plane,'' while their intra-species variations fan out along axes orthogonal to the plane. 
Such variation cues are not encoded in taxonomic labels. 
We theoretically prove that when species prototypes are nearly orthogonal (\ie, species are well separated), the contrastive objective prioritizes orthogonality between intra-species variations and inter-species differences over raw magnitude (\autoref{theorem1}).
Furthermore, these variations are observed to be increasingly separable as training data scales up. 
This microstructure preserves the intra-species representational diversity without interfering with inter-species distinctions, enabling various attribute recognition applications (\S\ref{subsec:beyond-species}). 

We show in \S\ref{sec:scaling} through quantitative and qualitative analyses that larger-scale training improves both inter-species ecological alignment and separation of intra-species variants. 
These scale-amplified patterns make the embedding space more interpretable and biologically meaningful.
\Ours evidences that combining domain-specific scaling with structured supervision can unlock qualitatively new emergent behaviors in scientific vision models.

\section{Related Work}

\mypara{Emergent properties in foundation models.}
Emergent properties refer to the capabilities implicitly acquired from the training process and generalized beyond the initial training objective. 
Large language models (LLMs) illustrate a variety of in-context learning skills after next-token-prediction pre-training~\cite{wei2022emergent,berti2025emergent}. 
Such emergence also arises in computer vision~\cite{mai2025ava}. 
DINO learns semantic segmentation through purely visual self-supervision~\cite{caron2021emerging}, whereas GroupViT learns semantic segmentation solely through text supervision~\cite{xu2022groupvit}. 
The studies closest to this work are~\cite{alper2024emergent,abbasi2024deciphering}. 
Alper and Averbuch-Elor explore the visual-semantic hierarchies in CLIP models~\cite{alper2024emergent}. Although hierarchical semantic structures are never explicitly presented as the training supervision, CLIP models acquire the capability of matching images with varying levels of descriptions. 
Abbasi \etal discover that CLIP models possess disentangled representational sub-spaces for different factors of variations~\cite{abbasi2024deciphering}. 
It allows CLIP models to generalize across compositional out-of-distribution concepts. 
This work leverages CLIP to distinguish different species with hierarchical labels, which is a different scenario from the above work. We investigate the emergent properties under this setting.

\mypara{Computer vision for ecology \& evolutionary biology.}
Ecology and evolutionary biology are naturally challenging for computer vision systems due to long-tail distributions, extremely fine-grained classifications, and a wide variety of image distributions.
Existing work formalizes these challenges into specific visual tasks such as attribute prediction \cite[NeWT,][]{van2021benchmarking}, trait prediction \cite[FishNet,][]{khan2023fishnet}, and plant disease detection \cite[PlantDoc,][]{singh2020plantdoc}.

Recent advancements in computer vision have led to the development of foundation models for biological applications.
\OursP incorporates taxonomic labels in the vision-language contrastive training, yielding promising species classification accuracy~\cite{stevens2024bioclip}.
Follow-up work scaled data to $162$M images~\cite[BioTrove,][]{yang2024biotrove}, specialized the data to camera traps~\cite[CATALOG and WildCLIP,][]{gabeff2024wildclip,santamaria2025catalog}, and added additional model modalities~\cite[TaxaBind,][]{sastry2025taxabind}.
We investigate both data and model scaling, with a focus on both broad biological applications and any emergent properties after extensive training.

\begin{figure}[t]
    \centering
    \small 
\begin{subfigure}[c]{0.33\textwidth}
    \includegraphics[width=\linewidth]{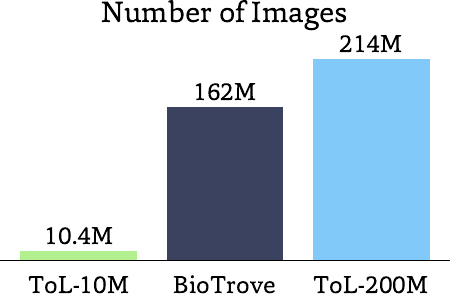}
    \caption{}
    \label{fig:image-count}
\end{subfigure}
\hfill
\begin{subfigure}[c]{0.33\textwidth}
    \includegraphics[width=\linewidth]{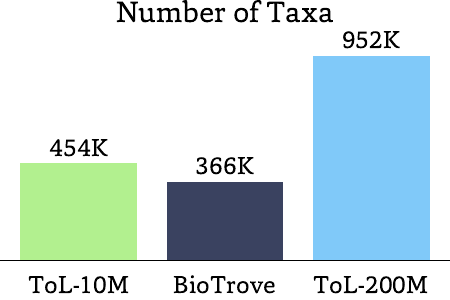}
    \caption{}
    \label{fig:biodiversity-compare}
\end{subfigure}
\hfill
\begin{subfigure}[c]{0.3\textwidth}
    \includegraphics[width=\textwidth]{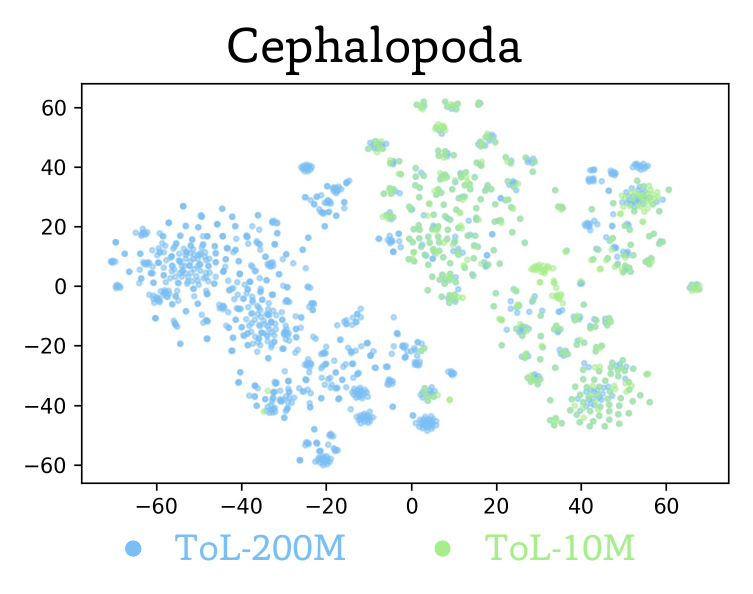}
    \caption{}
    \label{fig:coverage-compare}
\end{subfigure}
\vspace{-6pt}
\caption{\small \textbf{(a)} Number of images across organismal biology datasets. \textbf{(b)} Biodiversity comparison across datasets (measured unique 7-tuples for \textsc{TreeOfLife} (\textsc{ToL}) datasets, species count provided by BioTrove). \textbf{(c)} The taxa distributional difference in the \textit{Cephalopoda} class (octopuses, squids, \etc.) between \OursDS and \OursDPS.
}\label{fig:dataset-compare}
\vspace{-8pt}
\end{figure}

\section{\OursD}
\label{sec:data}

Large-scale, clean, diverse data drives progress in machine learning.
There have been efforts such as \OursDP~\cite{treeoflife_10m} and BioTrove~\cite{yang2024biotrove} to create large-scale biological organismal datasets for machine learning (ML). 
As shown in \autoref{fig:dataset-compare}, \OursDP~\cite{treeoflife_10m} improves on prior work such as iNat21~\cite{inat2021} and BIOSCAN-1M~\cite{gharaee2023bioscan1m} by increasing taxa diversity by a factor of $45$.
BioTrove~\cite{yang2024biotrove} increases data scale to $162$M but fails to match the biodiversity of \OursDP.
In this work, we combine the vast breadth of Global Biodiversity Information Facility (GBIF)~\cite{GBIF-DOI} images with those of the Encyclopedia of Life project (EOL)~\cite{eol}, BIOSCAN-5M~\cite{gharaee2024bioscan5m}, and FathomNet Database~\cite{katija_fathomnet_2022}. 
With nearly $214$ million images representing $952{,}257$ taxa, \href{https://huggingface.co/datasets/imageomics/TreeOfLife-200M}{\OursD} is the largest \emph{and} most diverse public ML-ready dataset for computer vision models in biology. 

Unlike BioTrove~\citep{yang2024biotrove}, which relies solely on iNaturalist and contains $162$M images but only $366$K unique species, our use of museum, camera-trap, and citizen-science contributions expands the taxonomic coverage to $2.6\times$ more taxa.
Our curation efforts ensure this breadth does not come at the cost of data quality. 
We quantify image type diversity from GBIF: $51.8$M museum specimen, $617.8$K camera trap, and $151$M citizen science images. Beyond the taxa-wise diversity, these images also provide more observing perspectives for the focal species. In such a way, the robustness of models trained on \OursD is significantly enhanced against a variety of use cases. 
Specifically, we demonstrate that \Ours yields a $22.8\%$ performance gap compared with \OursP on camera trap images (See \autoref{tab:species-classification}). 
We provide detailed statistics of the image number and taxa diversity from each data provider in \autoref{fig:training-data}.

 \subsection{Images}
 \OursD consists of images curated from four core data providers: 
GBIF, EOL, BIOSCAN-5M, and FathomNet.
GBIF and EOL aggregate biological data from various sources, such as iNaturalist~\cite{iNat-research-grade}, the Smithsonian Institution~\cite{smithsonian}, and Flickr~\cite{flickr}. 
BIOSCAN-5M and FathomNet are curated collections of expert-annotated images designed to improve cataloging and identification of species from highly diverse and under-represented branches of the tree of life. 
BIOSCAN-5M is part of the ongoing DNA Barcoding project to improve insect identification (one of the most diverse classes, \textit{Insecta}). 
FathomNet focuses on a habitat rather than a clade: all animals that live in the ocean.
Together, these comprise \OursD. 

\subsection{Data Curation \& Filtering}
\label{subsec:curation}
We retrieve data from data providers using \href{https://github.com/Imageomics/distributed-downloader}{\texttt{distributed-downloader}} \cite{Kopanev_Distributed-downloader_2025}.
The initial retrieval gives us $222{,}065{,}140$ images and $1{,}359{,}405$ unique taxonomic hierarchies. 
We then cleaned the data, focusing on (1) aligning taxonomic labels, (2) image quality, and (3) eliminating duplication and data leakage.
We summarize these efforts here, with more details provided in \S\ref{sec:appendix-data}.

\mypara{Taxonomic alignment.}
Curating large biological image collections from distributed data providers requires the alignment of noisy and inconsistent taxonomic labels both between and within providers. 
We develop a taxonomic alignment package \href{https://github.com/Imageomics/TaxonoPy}{\texttt{TaxonoPy}} in consultation with taxonomists that resolves entries to both a seven-rank Linnaean hierarchy and a common name.
\texttt{TaxonoPy} queries GNVerifier~\cite{Mozzherin_GNverifier_--_a_2024} against the GBIF Backbone, Catalogue of Life, and OpenTree hierarchies (in that order). 
From the original $1.36$M taxa, \texttt{TaxonoPy} filters $407$K taxonomic hierarchies (such as synonyms, provisional names, etc.) and yields $952$K unique taxa.

\mypara{Image-quality screens.}
Digital archives contain herbarium labels, empty camera-trap frames, and occasional people.
None add biological signal, and faces raise privacy concerns, so we drop them to keep learning focused on organisms via three neural-network-based filters:
(i) \textbf{Museum non-organism removal.} A pre-trained CLIP-L/14 \citep{radford2021learning} is used for a nearest-centroid classifier spanning $25$ fine-grained %Smithsonian 
subtypes (10 collection areas, each split into categories such as specimen, fossil, drawer-label, etc.). The classifier is fit to $8.5$K manually-curated examples and predicts a subtype for all museum images; we drop all non-organismal images. 
(ii) \textbf{Camera-trap trimming.} We apply a pre-trained camera-trap model MegaDetector \citep{beery2019efficient,hernandez2024pytorchwildlife} to filter for frames with visible animals.
(iii) \textbf{Face removal.} We apply a pre-trained face-detection model MTCNN \citep{zhang2016joint} to discard images containing human faces.
We release the code used in processing the images in \href{https://github.com/Imageomics/TreeOfLife-toolbox}{\texttt{TreeOfLife-toolbox}}.

\mypara{Duplicate and leakage control.}
Exact duplicates in the training set are removed with MD5 hashes. 
GBIF includes images from iNaturalist, a popular source for computer vision ecology benchmarks.
To prevent train-test leakage and inflated downstream scores, we compute both MD5 and perceptual PDQ \citep{facebook2019pdq} hashes for every test image and purge any near or exact duplicates from training.

\subsection{Taxa Coverage}
\label{sec:taxa-coverage}
The International Union for Conservation of Nature (IUCN) estimates $2.14$M species have been described~\cite{iucn2025-count}.
Following curation, there are nearly $868$K unique taxa labeled to the level of species in \OursD.\footnote{Not all images contain full 7-rank Linnaean taxa; for instance, $93\%$ %93.19
of BIOSCAN-5M images are not labeled to the species level. Thus, the unique taxa with non-null species is a more appropriate comparison.}
Based on the most recent IUCN Red List assessment~\cite{iucn2025-search}, \OursD demonstrates a particularly strong representation of threatened species, with $77.1\%$ coverage ($36{,}370$ species). 
This coverage establishes that the approach to integrating diverse data sources used in \OursD is a valuable resource for conservation research, providing representation for a substantial majority of species prioritized for global conservation action.

Notably, \OursD adds diverse clades that are extremely under-represented in prior work like \OursDP and BioTrove.
\autoref{fig:coverage-compare} compares the distribution of taxonomic names in the class of \textit{Cephalopoda} between \OursDP and \OursD. 
While they share overlaps, there are clades almost completely absent in \OursDP. 
These under-represented clades receive a substantial influx of samples in \OursD ($1{,}102$ new taxa). 
Another example is $55{,}085$ taxa of \textit{Fungi} in \OursD, close to $4\times$ of that in \OursDP ($14{,}793$). 
The improved diversity facilitates accurate species classification of these clades, as evidenced in \autoref{tab:species-classification} ($42.9\%$ absolute improvement over \OursP on zero-shot Fungi benchmark).

\section{\Ours and Species Classification}
\label{sec:method}

\begin{table}[t]
    \caption{
        \small Zero-, one-, and five-shot species classification top-1 accuracy across $10$ tasks for different models. \textbf{Bold} and \underline{underlined} entries indicate the \textbf{best} and \underline{second best} accuracies, respectively. \Ours outperforms both strong general-  (CLIP, SigLIP, DINOv3) and domain-specific- (\OursP, BioTrove-CLIP) baselines.
        ``Camera Trap'' is mean performance across $5$ camera-trap datasets; Appendix~\ref{sec:appendix-lila} contains more details.
    }
    \label{tab:species-classification}
    \setlength\tabcolsep{3pt}
    \newcommand{\faded}{\color{gray}}
    \centering
    \small
    \newcommand{\first}{\bfseries}
    \resizebox{\textwidth}{!}{
    \begin{tabular}{@{}lRRRRRRRRRRR@{}}
        \toprule
         & \multicolumn{5}{c}{\thead{Animals}} & \multicolumn{4}{c}{\thead{Plants \& Fungi}} & & \\  
        \cmidrule(lr){2-6} \cmidrule(lr){7-10} % \cmidrule(lr){11-11} \cmidrule(lr){12-13}
        \thead{Model} & \vertical{NABirds} & \vertical{Plankton} & \vertical{Insects} & \vertical{Insects 2} & \vertical{Camera Trap} & \vertical{PlantNet} & \vertical{Fungi} & \vertical{PlantVillage} & \vertical{Med. Leaf} & \vertical{Rare Species} & \text{Mean} \\
        \midrule
        \faded Random Guessing & \faded 0.2\hspace*{1em} & \faded 1.2\hspace*{1em} & \faded 1.0\hspace*{1em} & \faded 1.0\hspace*{1em} & \faded 3.5\hspace*{1em} & \faded 4.0\hspace*{1em} & \faded 4.0\hspace*{1em} & \faded 2.6\hspace*{1em} & \faded 4.0\hspace*{1em} & \faded 0.3\hspace*{1em} & \faded 2.2 \\
        \midrule
        \multicolumn{12}{l}{\textit{Zero-Shot Classification}} \\
        \midrule
        CLIP (ViT-L/14) & \underline{66.5}\hspace*{1em} & 1.3\hspace*{1em} & 9.0\hspace*{1em} & 11.7\hspace*{1em} & 29.5\hspace*{1em} & 61.7\hspace*{1em} & 7.6\hspace*{1em} & 6.5\hspace*{1em} & 25.6\hspace*{1em} & 35.2\hspace*{1em} & 25.5 \\
        SigLIP & 61.7\hspace*{1em} & 2.4\hspace*{1em} & 27.3\hspace*{1em} & \underline{20.7}\hspace*{1em} & \underline{33.7}\hspace*{1em} & 81.8\hspace*{1em} & 36.9\hspace*{1em} & \mathbf{28.5}\hspace*{1em} & \underline{54.5}\hspace*{1em} & \underline{47.6}\hspace*{1em} & \underline{39.5} \\
        BioTrove-CLIP & 39.4\hspace*{1em} & 1.0\hspace*{1em} & 20.5\hspace*{1em} & 15.7\hspace*{1em} & 10.7\hspace*{1em} & 64.4\hspace*{1em} & 38.2\hspace*{1em} & 15.7\hspace*{1em} & 31.6\hspace*{1em} & 24.6\hspace*{1em} & 26.2 \\
        \OursP & 58.8\hspace*{1em} & \mathbf{6.1}\hspace*{1em} & \underline{34.9}\hspace*{1em} & 20.5\hspace*{1em} & 31.7\hspace*{1em} & \underline{88.2}\hspace*{1em} & \underline{40.9}\hspace*{1em} & 19.0\hspace*{1em} & 38.5\hspace*{1em} & 37.1\hspace*{1em} & 37.6 \\
        \Ours & \mathbf{74.9}\hspace*{1em} & \underline{3.9}\hspace*{1em} & \mathbf{55.3}\hspace*{1em} & \mathbf{27.7}\hspace*{1em} & \mathbf{53.9}\hspace*{1em} & \mathbf{96.8}\hspace*{1em} & \mathbf{83.8}\hspace*{1em} & \underline{25.1}\hspace*{1em} & \mathbf{57.8}\hspace*{1em} & \mathbf{76.8}\hspace*{1em} & \mathbf{55.6} \\
        \midrule
        \multicolumn{12}{l}{\textit{One-Shot Classification}} \\
        \midrule
        CLIP (ViT-L/14) & 42.7_{\pm 0.8} & 28.9_{\pm 0.6} & 29.0_{\pm 0.4} & 17.0_{\pm 0.8} & 36.0_{\pm 2.8} & 58.7_{\pm 2.8} & 20.7_{\pm 2.2} & 56.7_{\pm 2.1} & 74.4_{\pm 1.9} & 34.3_{\pm 0.8} & 39.8 \\
        
        SigLIP & 39.9_{\pm 0.9} & 28.4_{\pm 0.5} & 32.3_{\pm 0.6} & 20.6_{\pm 1.3} & 37.8_{\pm 2.6} & 66.3_{\pm 5.3} & 28.7_{\pm 0.9} & 64.1_{\pm 3.0} & 81.7_{\pm 2.3} & 38.8_{\pm 0.7} & 43.9 \\
        Supervised-IN21K & 43.8_{\pm 0.8} & 23.5_{\pm 1.2} & 15.2_{\pm 1.1} & 18.2_{\pm 1.5} & 30.6_{\pm 2.4} & 63.8_{\pm 4.4} & 26.4_{\pm 1.4} & 52.8_{\pm 3.5} & 75.2_{\pm 4.4} & 31.6_{\pm 0.7} & 38.1 \\
        
        % DINOv2 & 55.4_{\pm 0.5} & 29.8_{\pm 1.4} & 8.7_{\pm 1.2} & \underline{22.0}_{\pm 1.5} & \underline{47.7}_{\pm 2.5} & \underline{73.2}_{\pm 4.0} & 32.8_{\pm 2.5} & 61.8_{\pm 1.8} & \mathbf{94.4}_{\pm 1.6} & \underline{47.4}_{\pm 0.4} & 47.3 \\
        DINOv3 & 48.3_{\pm 1.1} & \mathbf{36.5}_{\pm 0.8} & 8.8_{\pm 0.7} & 18.8_{\pm 1.6} & \underline{42.6}_{\pm 2.4} & 66.8_{\pm 4.7} & 27.5_{\pm 1.8} & \underline{64.8}_{\pm 1.2} & \mathbf{92.1}_{\pm 2.2} & 41.5_{\pm 0.2} & 44.8 \\
        BioTrove-CLIP & \underline{61.9}_{\pm 0.6} & 26.4_{\pm 0.5} & \underline{57.1}_{\pm 1.4} & \underline{20.9}_{\pm 0.7} & 31.2_{\pm 2.3} & \underline{69.7}_{\pm 3.4} & \underline{47.3}_{\pm 2.1} & 55.8_{\pm 3.4} & 83.5_{\pm 1.1} & 34.9_{\pm 0.4} & 48.9 \\
        \OursP & 57.4_{\pm 1.2} & 29.7_{\pm 1.1} & \underline{57.1}_{\pm 1.0} & 20.4_{\pm 0.9} & 35.0_{\pm 2.8} & 67.7_{\pm 3.9} & 44.6_{\pm 2.0} & 59.5_{\pm 2.5} & 83.7_{\pm 1.8} & \underline{44.9}_{\pm 0.7} & \underline{50.0} \\
        \Ours & \mathbf{82.4}_{\pm 1.1} & \underline{32.0}_{\pm 0.4} & \mathbf{74.6}_{\pm 0.4} & \mathbf{28.4}_{\pm 0.7} & \mathbf{48.1}_{\pm 2.2} & \mathbf{85.8}_{\pm 4.5} & \mathbf{70.3}_{\pm 2.6} & \mathbf{67.6}_{\pm 1.1} & \underline{92.0}_{\pm 1.9} & \mathbf{59.5}_{\pm 0.9} & \mathbf{64.1} \\
        \midrule
        \multicolumn{12}{l}{\textit{Five-Shot Classification}} \\
        \midrule
        CLIP (ViT-L/14) & 68.2_{\pm 0.3} & 48.2_{\pm 1.5} & 50.6_{\pm 0.7} & 30.1_{\pm 0.7} & 53.9_{\pm 2.2} & 75.9_{\pm 1.2} & 31.4_{\pm 2.5} & 78.3_{\pm 1.4} & 92.6_{\pm 0.7} & 53.3_{\pm 0.4} & 58.3 \\
        SigLIP & 64.2_{\pm 0.3} & 47.4_{\pm 1.1} & 54.9_{\pm 0.7} & \underline{35.2}_{\pm 0.5} & 56.9_{\pm 2.0} & 81.6_{\pm 1.4} & 45.5_{\pm 1.6} & 81.1_{\pm 0.7} & 94.1_{\pm 0.7} & 57.8_{\pm 0.6} & 61.9 \\
        Supervised-IN21K & 57.5_{\pm 0.4} & 40.1_{\pm 0.6} & 30.1_{\pm 0.7} & 30.3_{\pm 0.2} & 48.0_{\pm 2.2} & 77.2_{\pm 1.4} & 39.6_{\pm 1.9} & 78.0_{\pm 1.1} & 92.8_{\pm 0.9} & 48.8_{\pm 0.4} & 54.2 \\
        % DINOv2 & 73.7_{\pm 0.4} & 48.7_{\pm 0.5} & 14.7_{\pm 0.6} & 31.7_{\pm 0.7} & \underline{66.8}_{\pm 1.5} & \underline{87.6}_{\pm 0.3} & 49.6_{\pm 2.1} & 78.8_{\pm 1.3} & \mathbf{99.2}_{\pm 0.1} & 61.4_{\pm 0.5} & 61.2 \\
        DINOv3 & 72.3_{\pm 0.4} & \mathbf{57.8}_{\pm 1.6} & 19.7_{\pm 0.7} & 34.2_{\pm 0.6} & \underline{63.3}_{\pm 2.8} & 83.3_{\pm 1.4} & 44.1_{\pm 1.6} & \underline{82.4}_{\pm 1.1} & \mathbf{98.8}_{\pm 0.5} & 62.6_{\pm 0.5} & 61.8 \\
        BioTrove-CLIP & \underline{78.5}_{\pm 0.2} & 44.6_{\pm 0.6} & 77.0_{\pm 0.8} & 34.2_{\pm 0.6} & 47.9_{\pm 2.0} & \underline{86.0}_{\pm 1.0} & \underline{65.2}_{\pm 0.8} & 75.1_{\pm 0.8} & 96.2_{\pm 0.7} & 51.3_{\pm 0.2} & 65.6 \\
        \OursP & 78.2_{\pm 0.3} & 49.2_{\pm 1.1} & \underline{78.0}_{\pm 0.6} & 33.9_{\pm 0.6} & 54.3_{\pm 2.2} & 85.7_{\pm 1.7} & 61.6_{\pm 1.9} & 81.7_{\pm 1.1} & 96.7_{\pm 0.6} &  \underline{65.7}_{\pm 0.4} & \underline{68.5} \\
        \Ours & \mathbf{92.4}_{\pm 0.2} & \underline{50.5}_{\pm 1.1} & \mathbf{89.3}_{\pm 0.4} & \mathbf{44.3}_{\pm 1.1} & \mathbf{67.7}_{\pm 1.9} & \mathbf{94.4}_{\pm 0.8} & \mathbf{85.0}_{\pm 1.1} & \mathbf{83.9}_{\pm 0.9} & \underline{98.4}_{\pm 0.4} & \mathbf{77.2}_{\pm 0.4} & \mathbf{78.3} \\
        \bottomrule
    \end{tabular}
    }
\vspace{-6pt}
\end{table}

\OursP adopts a hierarchical multi-modal contrastive training framework, where images are associated with their corresponding hierarchical labels including taxonomic labels, scientific names, and common names~\cite{stevens2024bioclip}. 
Different from one-hot labels, taxonomic labels inherently encode hierarchical biological information from different levels~\cite{padial2010integrative}. 
In combination with an auto-regressive text encoder, \OursP yielded superior species classification performance on both zero- and few-shot settings. 
In this work, we stick with the hierarchical contrastive training recipe and focus on the impact of scale.

\mypara{Modifications.}
In addition to the significantly larger and more diverse dataset, we also scale model capacity by adopting a larger vision transformer (ViT-L/14 pre-trained on LAION-2B~\cite{radford2021learning,dosovitskiy2021an,schuhmann2022laionb}). 
An auxiliary replay mechanism is introduced~\cite{rebuffi2017icarl,rolnick2019experience} to maintain general-domain understanding for broader applications~\cite{santamaria2025catalog,vendrow2024inquire};
a portion of CLIP training data (LAION-2B) is interleaved simultaneously with species contrastive learning.
We ablate this decision and find that the experience replay improves biological understanding and performance across diverse tasks in \S\ref{sec:exp}. 

We train \Ours on $32$ NVIDIA H$100$ GPUs for $10$ days on $214$M organismal biology images with hierarchical labels and $26$M randomly-sampled image-text pairs from LAION-2B for $30$ epochs. 
We provide the training details in \S\ref{sec:appendix-training}.

\subsection{Species Classification Performance}
We evaluate \Ours on species classification tasks in \autoref{tab:species-classification}. 
We use the same benchmarks as BioCLIP~\cite{stevens2024bioclip}, including seven tasks from Meta-Album~\cite{meta-album-2022} and Rare Species~\cite{rare_species_2023}. We substitute Birds-525~\cite{piosenka2023birds} with NABirds~\cite{van2015building} due to data inaccessibility. 
Additionally, we collect a test set of \href{https://huggingface.co/datasets/imageomics/IDLE-OO-Camera-Traps}{IDLE-OO Camera Traps} from the Labeled Information Library of Alexandria: Biology and Conservation (LILA-BC)~\cite{lila-datasets,island-ct,lion-ct,velez2022choosing-orinoquia,balasubramaniam2024-oh-small,yousif2019dynamic-ENA24} to illustrate more realistic species classification applications in the wild. 
The results suggest substantial improvements of \Ours over \OursP.
Particularly, attributed to more comprehensive species and image type coverage of \OursD, we observe over $20\%$ zero-shot improvement on Camera Trap, Fungi, and Rare Species. 
On average, \Ours surpasses the second-best model by $16.1\%$ and provides a $30.1\%$ improvement over the original CLIP model that serves as weight initialization. Information on baselines is in \S\ref{sec:appendix-baseline}.

\section{Emergent Properties from Scaling Hierarchical Contrastive Learning}

\begin{table}[t]
    \caption{
        \small Biological visual tasks beyond species classification. \textbf{Bold} and \underline{underlined} entries indicate the \textbf{best} and \underline{second best} accuracies. See \S\ref{sec:downstream-detail} for task and evaluation methodology details.
    }
    \label{tab:downstream}
    \setlength\tabcolsep{7pt}
    \newcommand{\faded}{\color{gray}}
    \centering
    \small
    \newcommand{\first}{\bfseries}
    \scalebox{0.9}{
    \begin{tabular}{@{}lRRRRRR@{}}
        \toprule
         & \multicolumn{3}{c}{\thead{Animals}} & \multicolumn{2}{c}{\thead{Plants}} & \\
        \cmidrule(lr){2-4} \cmidrule(lr){5-6}
        \thead{Model} & \vertical{FishNet} & \vertical{NeWT} & \vertical{AwA2} & \vertical{Herb. 19} & \vertical{PlantDoc} & \text{Mean} \\
        \midrule
        CLIP (ViT-L/14) & 27.9_{\pm 0.2} & 83.4_{\pm 0.1} & 61.6_{\pm 0.6} & 18.2_{\pm 0.1} & 22.3_{\pm 3.3} & 42.7 \\
        SigLIP & 31.9_{\pm 0.1} & 83.2_{\pm 0.1} & \underline{67.3}_{\pm 0.6} & 18.6_{\pm 0.2} & 28.2_{\pm 5.3} & 45.8 \\
        Supervised-IN21K & 29.4_{\pm 0.1} & 75.8_{\pm 0.2} & 52.7_{\pm 1.6} & 14.9_{\pm 0.1} & 25.1_{\pm 1.1} & 39.6 \\
        % DINOv2 & 37.4_{\pm 0.2} & 83.7_{\pm 0.0} & 48.6_{\pm 1.7} & 28.1_{\pm 0.6} & 38.6_{\pm 3.0} & 47.3 \\
        DINOv3 & \underline{37.9}_{\pm 0.1} & \underline{85.7}_{\pm 0.0} & 48.0_{\pm 2.8} & \underline{31.2}_{\pm 0.2} & \underline{40.3}_{\pm 1.2} & 48.6 \\
        BioTrove-CLIP & 22.1_{\pm 0.0} & 82.5_{\pm 0.1} & 45.7_{\pm 0.7} & 20.4_{\pm 0.2} & 37.7_{\pm 1.2} & 41.7 \\
        \OursP & 30.1_{\pm 0.2} & 82.7_{\pm 0.1} & 65.9_{\pm 0.3} & 26.8_{\pm 0.4} & 39.5_{\pm 2.3} & \underline{49.0} \\
        \Ours & \mathbf{39.8}_{\pm 0.4} & \mathbf{89.1}_{\pm 0.1} & \mathbf{69.5}_{\pm 1.1} & \mathbf{48.6}_{\pm 0.6} & \mathbf{40.4}_{\pm 3.7} & \mathbf{57.5} \\
        \bottomrule
    \end{tabular}
    }
\vspace{-4pt}
\end{table}

\subsection{Beyond Species Classification}\label{subsec:beyond-species}
Biology's organization extends beyond species taxonomies; if scaling truly induces emergent behavior, model representations learned through species-level supervision should transfer to problems far removed from species classification.
We collect and benchmark models on five visual benchmarks that push past species ID: habitat classification (ecological context) \cite{khan2023fishnet}, trait prediction (evolutionary studies) \cite{van2021benchmarking,xian2018zero}, new-species identification (biodiversity monitoring) \cite{herbarium-2019-fgvc6}, and agricultural disease detection \cite{singh2020plantdoc}. 
For each task, we keep the evaluated models frozen and extract the corresponding sample embeddings. 
The embeddings are subsequently processed using machine-learning techniques (\eg, support vector classifiers). Detailed evaluation procedures are listed in \S\ref{sec:downstream-detail}.

\autoref{tab:downstream} presents the performance comparison among \Ours, vision-language baselines, and vision-only models. Although no information on these tasks is explicitly described during training, \Ours yields an average performance improvement of $14.8\%$ over the original CLIP baseline. 
DINOv3 is commonly believed to capture fine-grained visual features and is adopted for diverse visual tasks~\cite{simeoni2025dinov3,han2024few,damm2025anomalydino}.
Nevertheless, \Ours yields an $8.9\%$ performance gap over DINOv3. 

\begin{figure}[t]
\centering
\begin{subfigure}[c]{0.3\textwidth}
    \includegraphics[width=\linewidth]{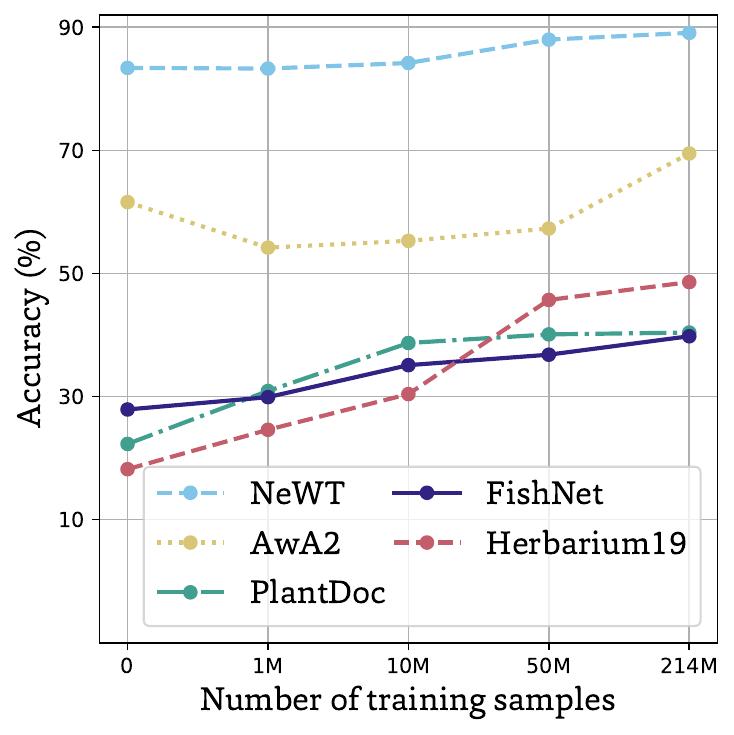}
% \vspace{-12pt}
    \caption{}
    \label{fig:scale-curve}
\end{subfigure}
\hfill
\begin{subfigure}[c]{0.3\textwidth}
    \includegraphics[width=\linewidth]{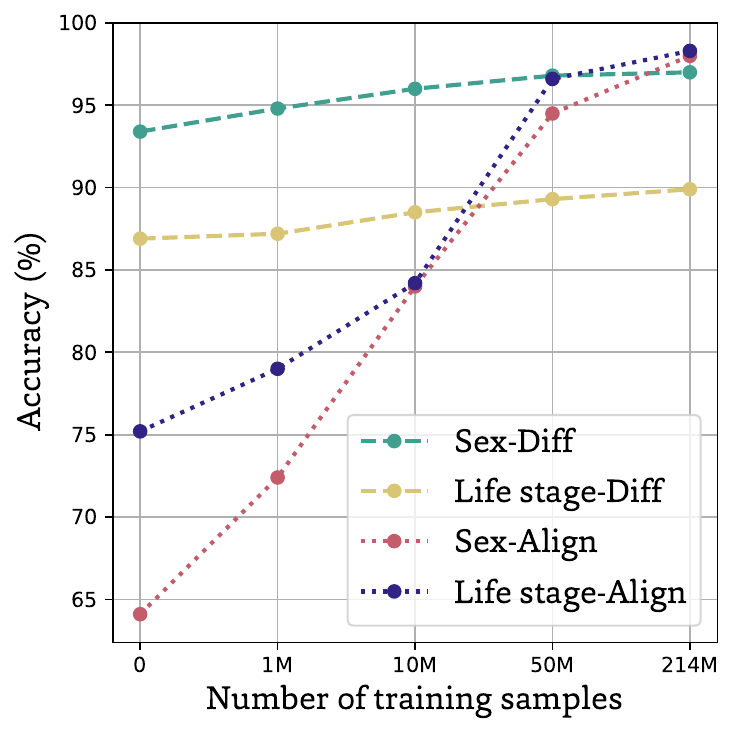}
% \vspace{-12pt}
    \caption{}
    \label{fig:scale-curve-age}
\end{subfigure}
\hfill
\begin{subfigure}[c]{0.37\textwidth}
    \includegraphics[width=\linewidth]{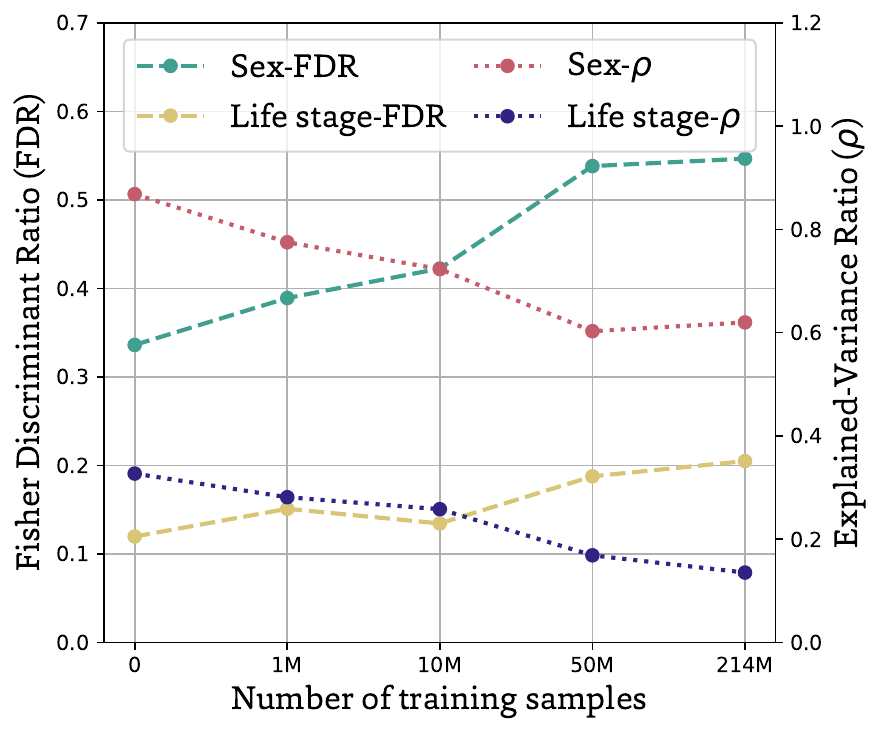}
% \vspace{-12pt}
    \caption{}
    \label{fig:scale-fdr}
\end{subfigure}
\vspace{-4pt}
\caption{\small \textbf{(a)} The model performance on five downstream tasks under different scales of training data. \textbf{(b)} The model performance on differentiating and aligning different life stages and sexes. \textbf{(c)} The separation and orthogonality evaluation of models trained with different amounts of data.}
\label{fig:scale}
\vspace{-6pt}
\end{figure}

\subsection{Scaling Trends}\label{sec:scaling}
Better species classification with more species-labeled data is expected, but its effect on other, non-species classification tasks is unexpected. 
To better understand the relationship between training data scale and non-species classification performance, we apply the same hierarchical training with varying sizes of data: $1$M, $10$M, $50$M, and $214$M samples. 
The smaller datasets are randomly sampled from the complete set without losing taxa representativeness. 
We compare the performance of the baseline CLIP ViT-L/14 and the four obtained models on five non-species classification tasks in \autoref{fig:scale-curve}. A consistent improvement is observed as the volume of training data increases from $1$M to $214$M. 
In AwA2, for example, the $1$M model is worse than the baseline. However, the model gradually learns more generalized representations for different attributes across species and obtains improved performance as data scales up. 

Next, we investigate how scale affects representations within species.
We collect two groups of images with intra-species appearance variations: life stage variations from NeWT~\cite{van2021benchmarking} and sex variations from NABirds~\cite{van2015building}.
We ask whether scaling hierarchical contrastive training collapses all images of one species onto a single prototype or still distinguishes juveniles from adults and males from females.
We accordingly design two complementary tasks for each type of variation: (i) \emph{alignment}, where a species classifier trained on one variant (\eg, juvenile images) is expected to recognize the species on the other variant (\eg, adult images), and (ii) \emph{differentiation}, where the task is to tell the variants apart (\eg, juvenile \vs adult). 
As illustrated in \autoref{fig:scale-curve-age}, data scale steadily improves cross-variant species recognition. But at the same time, the model also becomes \emph{better} at distinguishing the variants themselves. 

\begin{figure}[t]
    \centering
    \small 
    \includegraphics[width=\linewidth]{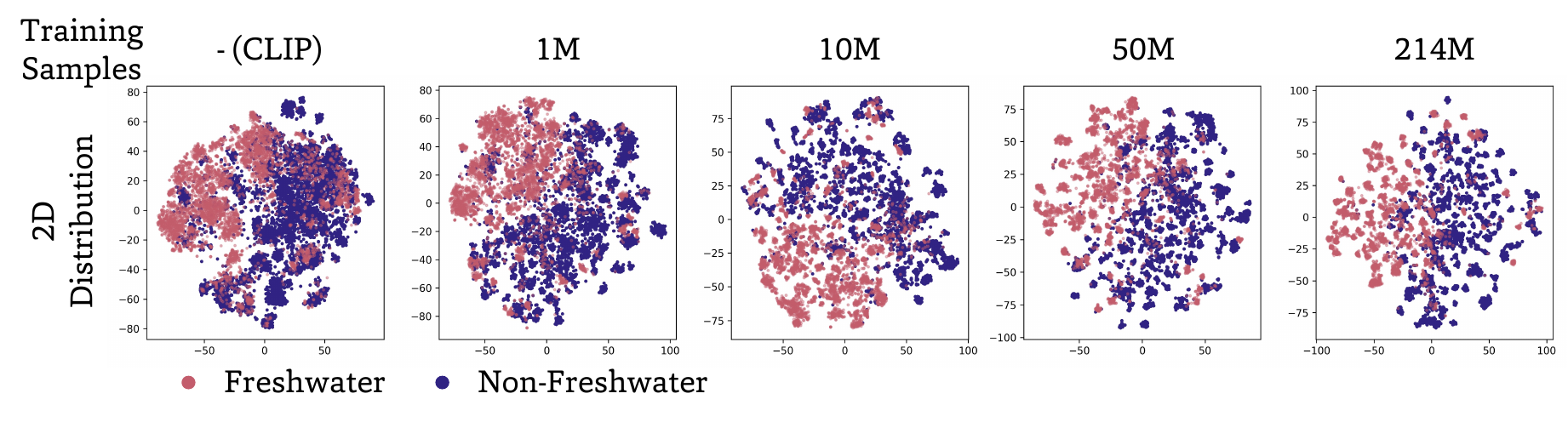}
    \vspace{-12pt}
    \caption{\small t-SNE embedding visualization of FishNet test set for models trained with different amounts of data.
    The leftmost plot is the original LAION-2B CLIP ViT-L/14.
    As the training data scales, freshwater fish become more distinct from saltwater fish and brackish fish, despite no explicit supervision, demonstrating that data scale contributes to emergent properties in model representations.
    }
    \label{fig:fishnet-tsne}
\vspace{-8pt}
\end{figure}

\subsection{Emergent Properties and Qualitative Analysis}
Why does scaling data boost tasks that are never supervised during hierarchical contrastive training?
We look deeper into \Ours's embedding space and identify two emergent properties that generalize beyond species classification.

First, the embedding distribution of different species \textit{aligns with their ecological and functional relationships}. 
\autoref{fig:fishnet-tsne} shows t-SNE plots~\cite{JMLR:v9:vandermaaten08a} of FishNet test set embeddings at four training scales, colored by whether the fish can live in freshwater or not.
In the baseline CLIP plot (left), freshwater and non-freshwater fish have a large portion of overlap. Larger training sets progressively separate the two groups in the embedding space.
We note that there is no explicit constraint to arrange meaningful distribution across species in contrastive loss, highlighting the emergence at scale.

Second, the intra-species variations are \textit{preserved and separated}. 
\autoref{fig:scale-dist} shows t-SNE plots of \Ours embeddings of three species from NeWT exhibiting life-stage variation. 
In the 2D plots (top), different species (shown in different colors) tighten progressively from left to right, which is a direct consequence of scaled contrastive training. 
At the same time, the intra-species variations are preserved and better separated than the baseline CLIP model (leftmost sub-figure). 
We further project the embeddings onto 3D spaces created by singular value decomposition (SVD, bottom), which reveals that the intra-species variations lie in subspaces roughly orthogonal to the species span. 
Therefore, the existence of intra-species variation does not interfere with inter-species distinctions. 
\S\ref{sec:appendix-empirical} contains more empirical observations.

\begin{figure}[t]
    \centering
    \small 
    \includegraphics[width=\linewidth]{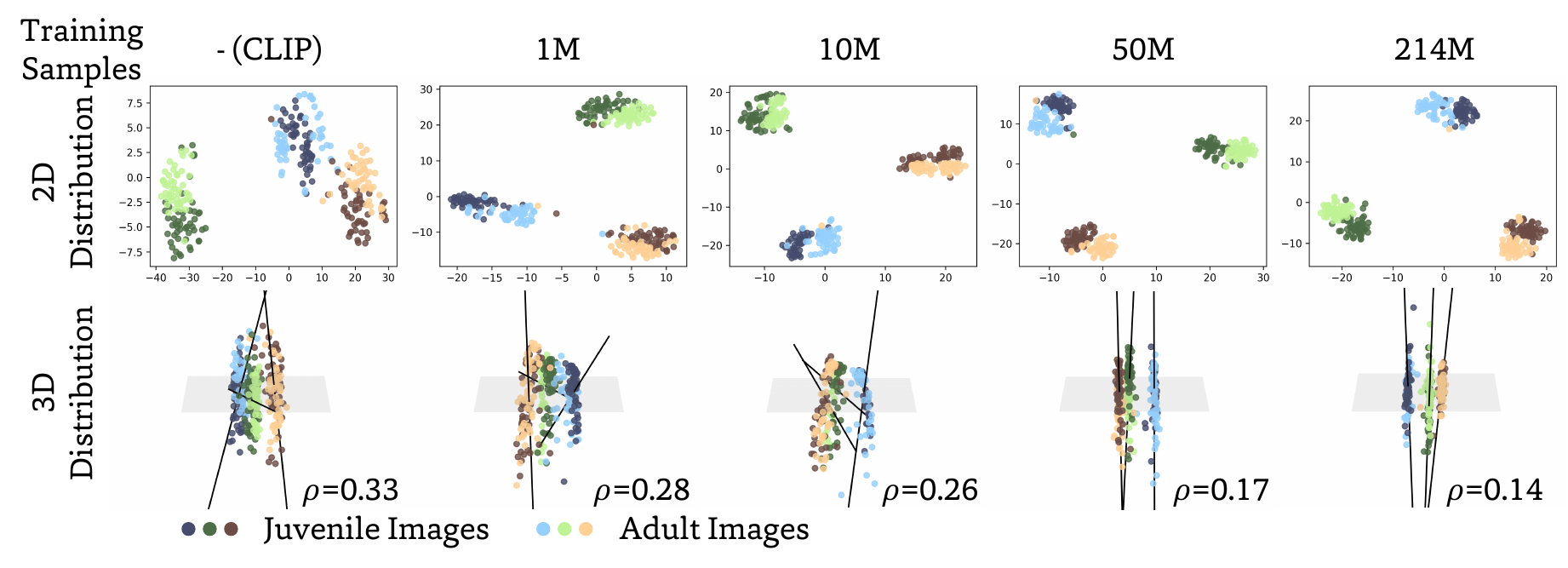}
    \vspace{-10pt}
    \caption{\small The embedding distribution of life stage variations under different scales of training data. The 2D distributions are obtained using t-SNE. For the 3D distributions, we first run SVD with the mean embedding of each species. The first two singular vectors are used to construct the gray plane that captures most inter-species differences. The embeddings are then projected into the 3D space with an additional orthogonal dimension. The straight lines point from the mean embedding of juvenile images to that of the adult images. As the training scales up, the intra-species variations are preserved in the subspace orthogonal to the inter-species differences. Orthogonality improves with data scale, as evidenced by the decreasing explained-variance ratio $\rho$.
    }
    \label{fig:scale-dist}
\vspace{-8pt}
\end{figure}

\subsection{Formal Analysis}
We next ask \emph{why} these two properties of inter-species ecological alignment and intra-species variation separation emerge with scale.

\mypara{Inter-species ecological alignment.}
As training data increases, species that share proximal taxonomic labels are pulled toward common textual prototypes at multiple levels. 
As related taxa typically share morphology, behavior, and ecological characteristics, this multi-level supervision aligns visual similarity with functional similarity~\cite{padial2010integrative}. With more samples per species providing supervision at scale, embeddings of species in the same family or genus form coherent macro-clusters. In effect, the embeddings extracted by \Ours are more separable for different ecological groups. 

\mypara{Intra-species variation separation.}
While the inter-species ecological alignment can be explained by hierarchical supervision, the intra-species variations are \textit{not} encoded in taxonomic labels. 
The preservation of intra-species variations also contradicts the common intuition of contrastive training effects. 
Therefore, we investigate the optimization of contrastive loss. 
We propose the following theorem to suggest that subspace orthogonal to inter-species differences is allowed after extensive training to accommodate intra-species variations.
\begin{theorem}\label{theorem1}
    Let $\vmu$ be the prototypes of species, with $\vmu_s$ as the prototype of species $s$. Let $\tau$ be temperature. If different $\vmu_k$ are nearly orthogonal (\ie, species are well separated), the intra-species variation $\delta$ for species $s$ is constrained by
    \[
    \vdelta^\top\left[\frac{1}{2\tau^2}\left(\sum_k w_k\vmu_k\vmu_k^\top-\vmu_s\vmu_s^\top\right)\right]\vdelta,\quad \text{where} \quad w_k=\frac{\exp(\vmu_s^\top\vmu_k)/\tau}{\sum_k\left(\exp(\vmu_s^\top\vmu_k)/\tau\right)}.
    \]
\end{theorem}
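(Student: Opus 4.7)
\textbf{Proof plan for Theorem~\ref{theorem1}.}

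The plan is to view the theorem as a second-order Taylor expansion of the contrastive loss, evaluated at a perturbed embedding $\vz=\vmu_s+\vdelta$ around the species prototype $\vmu_s$, and then to invoke the near-orthogonality hypothesis to simplify the Hessian to the stated form. Concretely, I would begin by writing the InfoNCE-style contribution from an image of species $s$ as
\[
\mathcal{L}(\vz) \;=\; -\frac{\vz^\top\vmu_s}{\tau} \;+\; \log\sum_k \exp\!\left(\frac{\vz^\top\vmu_k}{\tau}\right),
\]
so the intra-species variation $\vdelta$ appears only through $b_k \;:=\; \vdelta^\top\vmu_k/\tau$. The linear term in $\vz$ contributes $-\vdelta^\top\vmu_s/\tau$, and the log-sum-exp is a smooth convex function whose derivatives at $\vz=\vmu_s$ involve only the softmax weights $w_k$ defined in the theorem.

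Next, I would Taylor-expand the log-sum-exp term around $\vdelta=\vzero$. Using $\exp(b_k)\approx 1+b_k+b_k^2/2$ and $\log(1+x)\approx x-x^2/2$, a short computation yields
\[
\log\!\sum_k e^{a_k+b_k} \;\approx\; \log\!\sum_k e^{a_k} \;+\; \sum_k w_k b_k \;+\; \tfrac{1}{2}\Big(\sum_k w_k b_k^2 - \big(\sum_k w_k b_k\big)^2\Big),
\]
where $a_k=\vmu_s^\top\vmu_k/\tau$. Substituting $b_k=\vdelta^\top\vmu_k/\tau$, the quadratic part becomes
\[
\frac{1}{2\tau^2}\,\vdelta^\top\!\left[\sum_k w_k\vmu_k\vmu_k^\top \;-\; \Big(\sum_k w_k\vmu_k\Big)\!\Big(\sum_k w_k\vmu_k\Big)^\top\right]\!\vdelta,
\]
which is a softmax-weighted covariance of the prototypes, i.e., the Hessian of the log-partition.

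The third step is to apply the near-orthogonality assumption. If the $\vmu_k$ are unit-norm and nearly orthogonal so that $\vmu_s^\top\vmu_k\ll\vmu_s^\top\vmu_s$ for $k\neq s$, then $w_s \to 1$ and $w_{k\neq s}\to 0$ at sufficiently small $\tau$, whence $\sum_k w_k\vmu_k \approx \vmu_s$ and the mean-outer-product simplifies to $\vmu_s\vmu_s^\top$. This reduces the Hessian precisely to
\[
\frac{1}{2\tau^2}\left(\sum_k w_k\vmu_k\vmu_k^\top - \vmu_s\vmu_s^\top\right),
\]
which is the quadratic form stated in the theorem. The first-order contribution $-\vdelta^\top\vmu_s/\tau + \sum_k w_k b_k = \vdelta^\top(\sum_k w_k\vmu_k-\vmu_s)/\tau$ vanishes in the same limit, so $\vdelta=\vzero$ is a stationary point and the quadratic form really is the leading penalty on intra-species deviations.

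Finally, to interpret the bound as in the theorem, I would observe that the resulting matrix is positive-semidefinite with nullspace spanned by directions orthogonal to $\{\vmu_k\}_{k\neq s}$: perturbations along such directions incur essentially zero loss, whereas components along other prototypes are penalized proportionally to $w_k/\tau^2$. \textbf{The main obstacle} is making the "nearly orthogonal" assumption quantitatively precise, namely controlling the error in $\sum_k w_k\vmu_k \approx \vmu_s$ in terms of the largest off-diagonal inner product and the temperature $\tau$, and arguing that higher-order Taylor terms remain negligible under this regime; careful bookkeeping of these approximation errors is where the proof's technical weight lies.
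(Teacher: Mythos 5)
Your proposal follows essentially the same route as the paper's proof: substitute $\vz=\vmu_s+\vdelta$ into the InfoNCE loss, Taylor-expand the log-sum-exp to second order to obtain the softmax-weighted covariance $\sum_k w_k\vmu_k\vmu_k^\top-\vm\vm^\top$ with $\vm=\sum_k w_k\vmu_k$, and then use $w_s\approx 1$ (hence $\vm\approx\vmu_s$) to kill the first-order term and reduce the Hessian to the stated form. The quantitative control of the approximation errors that you flag as the main obstacle is also left informal in the paper, so your argument matches it in both structure and level of rigor.
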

\begin{proof}
\vspace{-6pt}
    See \S\ref{sec:proof-theorem1}. 
\vspace{-6pt}
\end{proof}

Thus, as long as the variation $\vdelta$ is distributed in a subspace orthogonal to the inter-species distinctions, the scale of $\vdelta$ won't interfere with the overall contrastive optimization. 
The orthogonality is qualitatively supported by \autoref{fig:scale-dist}. 
To further quantify it, we calculate the explained-variance ratio, \ie, the ratio in the intra-species variation that is captured by the species span~\cite{abdi2010principal}. We first obtain an orthonormal basis for the species prototypes $\mU$ using QR decomposition.
Let $\mD=[\vd_1, \dots, \vd_n]\in \mathbb{R}^{d\times n}$ be the matrix stacking $n$ intra-species variation difference vectors with dimension $d$. The explained-variance ratio calculates the energy fraction inside species span by $\rho=\Vert\mU^\top\mD\Vert^2_F/\Vert\mD\Vert^2_F$, where $\Vert\cdot\Vert_F$ denotes the Frobenius norm~\cite{golub2013matrix}. 
We show the ratio change as the data scales up in \autoref{fig:scale-fdr}. The results suggest that the intra-species variations are increasingly orthogonal to the species differences. 
Due to the orthogonality, the existence of intra-species variations will not interfere with the inter-species distinctions. 
The observation of smaller projection areas in \autoref{fig:scale-dist} at the species span also indicates better species classification accuracy after extensive contrastive training. 

Furthermore, we look at the separation between these variations using the Fisher Discriminant Ratio (FDR) metric~\cite{fisher1936use}. Given two variant classes $A$ and $B$, FDR is defined by their embedding mean $\vmu$ and standard deviation $\sigma$: $\text{FDR}=\Vert\vmu_A-\vmu_B\Vert^2/(\sigma_A^2+\sigma_B^2)$.
The increasing trends in \autoref{fig:scale-fdr} support that, in addition to orthogonality, the intra-species variations are more separable as the training scales up.
This trend is also qualitatively evidenced by the top row of \autoref{fig:scale-dist}. 
These theoretical and empirical insights validate that \Ours learns to preserve and separate intra-species variations without explicit training constraints. 
We provide more qualitative analyses in \S\ref{sec:appendix-qualitative}.

\section{Ablation Study and Analysis}
\label{sec:exp}

\mypara{The necessity of contrastive loss and taxonomic labels.}
The previous analyses highlight the effectiveness of hierarchical contrastive training.
\autoref{tab:ablation-loss} ablates two key modeling decisions using \OursDP as a feasible testbed: scientific names solely vs. hierarchical labels and contrastive learning vs. cross-entropy loss on one-hot labels.
The experiments are conducted on \OursDP data with ViT-L/14 as the visual encoder.
When trained solely with scientific names, the model loses some of the hierarchical supervision embedded in taxonomic labels. 
As a result, the performance on FishNet drops by $1.3\%$. However, contrastive training still leads to separation for both species and intra-species variations. 
Training a $952$K-class softmax classifier with cross-entropy loss is optimization-heavy and leads to inferior performance on all benchmarks. 
The adopted hierarchical contrastive supervision leverages the advantages of both aspects and yields the best overall performance across benchmarks. 

\begin{figure}[t]
\centering
\begin{minipage}{0.66\textwidth}
\captionsetup{type=table}\caption{\small Ablation study with different training settings. \Ours adopts hierarchical contrastive training with taxonomic labels. We ablate using scientific names solely without taxonomic labels, and one-hot labels with cross-entropy loss instead of the contrastive objective. }
\label{tab:ablation-loss}
\small
\centering
\resizebox{\textwidth}{!}{
\begin{tabular}{@{}lCCC@{}}
\toprule
    \multirow{2}{*}{Dataset} 
    & \text{Hierarchical} & \text{Contrastive w/} & \text{Cross-entropy} \\
    & \text{Contrastive} & \text{Scientific Name} & \text{Loss} \\
\midrule
    FishNet & \mathbf{35.1}_{\pm 0.1} & 33.8_{\pm 0.1} & 33.0_{\pm 0.1} \\
    PlantDoc & \mathbf{38.7}_{\pm 3.7} & 37.3_{\pm 3.3} & 30.9_{\pm 1.9} \\
    Life stage-Diff & 88.0_{\pm 0.1} & \mathbf{88.5}_{\pm 0.2} & 85.5_{\pm 0.2} \\
    Life stage-Align & 84.1_{\pm 0.1} & \mathbf{84.5}_{\pm 0.1} & 78.6_{\pm 0.1} \\
    Sex-Diff & \mathbf{97.0}_{\pm 0.1} & 96.6_{\pm 0.1} & 95.5_{\pm 0.1} \\
    Sex-Align & \mathbf{84.1}_{\pm 0.2} & 82.7_{\pm 0.3} & 74.9_{\pm 0.2} \\
\bottomrule
\end{tabular}
}
\end{minipage}
\hfill
\begin{minipage}{0.31\textwidth}
\raisebox{-12pt}{
\includegraphics[width=\linewidth]{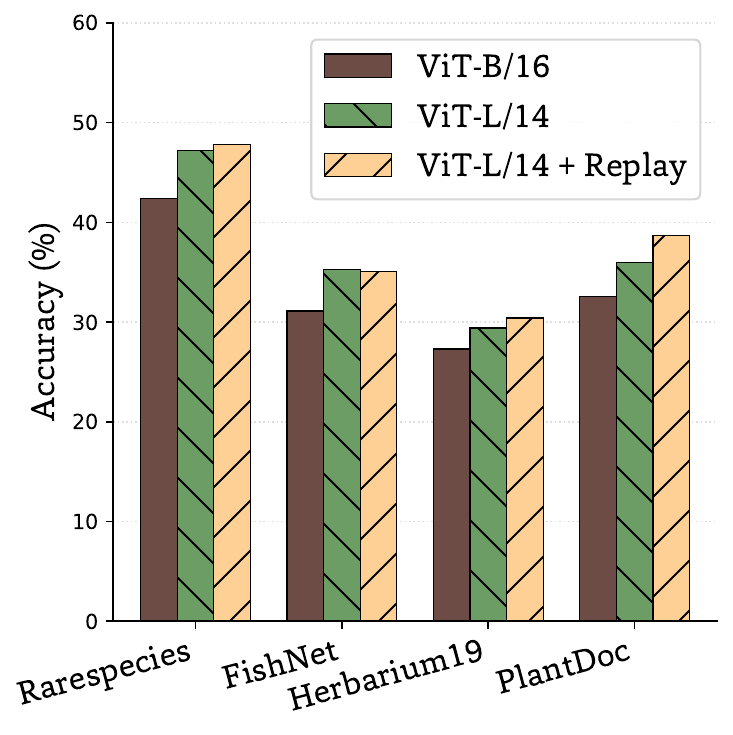}
}
\vspace{-4pt}
\caption{\small Ablation study on model size and experience replay.}
\label{fig:ablation-model}
\end{minipage}
\vspace{-6pt}
\end{figure}

\mypara{Architecture and replay.}
\Ours scales up the visual encoder of \OursP from a ViT-B/16 to ViT-L/14 and introduces experience replay of CLIP training data (LAION-2B). 
We ablate the effects of these two changes, again using \OursDP as a testbed.
\autoref{fig:ablation-model} shows that increasing model capacity improves performance across all benchmarks. Comparatively, experience replay leads to better species classification accuracy and improved performance on some of the other visual tasks. 
We provide a more detailed empirical study of experience replay in \S\ref{sec:appendix-replay}.

\section{Conclusion}

In this work, we curate \OursD, the largest and most diverse biological organism dataset to date, and train \Ours with hierarchical taxonomic labels.
\Ours achieves state-of-the-art accuracy on species classification. 
More importantly, large-scale training gives rise to two emergent properties not described during training. 
At the inter-species level, the embedding distribution of different species aligns with their ecological relationships. 
At the intra-species level, the appearance variations within species are preserved and well separated in the embedding space. 
We demonstrate that combining the effort of domain-specific scaling and structured supervision leads to effective generalization beyond the initial training objectives. 
\Ours serves as a strong foundation model for biological research and simultaneously evidences the effectiveness of scale-driven scientific discovery. 

\ack{
We would like to thank Zhiyuan Tao, Shuheng Wang, Ziheng Zhang, Zhongwei Wang, and Leanna House for their help with the \OursD dataset, Charles (Chuck) Stewart, Sara Beery, and other \href{https://imageomics.osu.edu/about/team}{Imageomics Team} members for their constructive feedback and Sergiu Sanielevici, Tom Maiden, and TJ Olesky for their dedicated assistance with arranging the necessary computational resources.

We are grateful to Kakani Katija and Dirk Steinke for helpful conversations regarding use and integration of FathomNet and BIOSCAN-5M, respectively, as well as Stephen Formel and Markus Döring for GBIF. We thank Marie Grosjean for comparative methods for filtering citizen science images and Dylan Verheul for assistance with acquiring images from observation.org from GBIF. 
We thank Suren Byna for a helpful conversation on early dataset design decisions.
We thank Doug Johnson for his collaboration in hosting this large dataset on the Ohio Supercomputer Center research storage file system.

Our research is supported by NSF OAC 2118240 and resources from the Ohio Supercomputer Center~\cite{OhioSupercomputerCenter1987}.
This work used the Bridges-2 system, which is supported by NSF award number OAC-1928147 at the Pittsburgh Supercomputing Center (PSC)~\cite{10.1145/3437359.3465593}, under the auspices of the NAIRR Pilot program.
}

{
    \small
    \bibliographystyle{unsrtnat}
    \bibliography{main}
}

\newpage

\appendix

\clearpage
\section*{Appendix}

The Appendix is organized as follows:
\begin{itemize}
    \item In \S\ref{sec:appendix-limitation} we discuss limitations of our work.
    \item In \S\ref{sec:appendix-impact} we discuss the broader impacts of our work.
    \item In \S\ref{sec:proof-theorem1} we present the proof of \autoref{theorem1}.
    \item In \S\ref{sec:appendix-training}  we describe the training implementation for \Ours in detail.
    \item In \S\ref{sec:appendix-baseline} we introduce the baselines employed for the numerical experiments.
    \item In \S\ref{sec:appendix-empirical} we demonstrate more empirical observations of \Ours.
    \item In \S\ref{sec:appendix-collapse} we discuss the relationship between the adopted training scheme and neural collapse. 
    \item In \S\ref{sec:downstream-detail} we describe both the biological visual benchmarks and the implementation details.
    \item In \S\ref{sec:appendix-data} we provide the details of data processing for \OursD. 
    \item In \S\ref{sec:appendix-contribution} we list the contribution of each author.
\end{itemize}

\section{Limitations}\label{sec:appendix-limitation}
\mypara{Theoretical limitation.}
In this work, we have proved that the intra-species variations are preserved in subspaces orthogonal to the inter-species difference. We also have empirical observations that the variations are more separable as the training data scales up (See \autoref{fig:scale-fdr}). However, we haven't formally proved this. It will be our future work to have deeper theoretical analyses of the separation of intra-species variations to better understand \Ours's emergent properties.

\mypara{Data limitation.}
\OursD is an imbalanced dataset in both taxonomic coverage and image type. Specifically, the dataset exhibits a long-tailed distribution across taxa. This is to be expected when working with biological data---not all taxonomic ranks are represented evenly across the tree of life. For instance, though \OursD has a balanced representation (at the kingdom level) between plants and animals, animals represent a larger proportion of described species~\cite{iucn2025-count}. 

Some of this is due to the nature of the image type distribution, which we provide for GBIF (Camera-trap, Citizen Science, and eleven Museum Specimen types: Fungi, Insect, Invertebrate Zoology, Microbiology, Plant, Vertebrate Zoology - [Amphibians, Birds, Fishes, Mammals, Others], as well as Unidentified). EOL contains the same categories, but we do not have precise numbers; BIOSCAN-5M are essentially all insect museum specimens, though the images are taken by researchers, so will skew toward their area of study; FathomNet contains a mix. Citizen Science images are the vast majority ($151$M from GBIF alone); these will skew toward more charismatic species and plants. Our next largest category is museum specimen images ($51.8$M from GBIF), which are limited more to representatives of the species, so may not contain as many images per taxa, though more taxa are represented.
Finally, camera trap images make up the smallest portion of the dataset ($617.8$K in GBIF), and when filtered, these are only images of animals and generally those large enough to trigger a motion sensor or be detected in the primary provider's post-processing. 

Further emphasis on the impact of citizen science images in amassing larger representations of species: the most prevalent taxonomic classes, flowering plants, insects, birds, mushrooms, and mammals, have millions of representative images, while the least-represented, microscopic organisms (e.g., bacteria, viruses), have a dozen or fewer.

\section{Broader Impacts}\label{sec:appendix-impact}
\Ours and \OursD provide great potential to improve and enhance existing conservation efforts, in particular by facilitating recognition of threatened species. As noted in \S\ref{sec:taxa-coverage}, \OursD has expansive coverage of threatened species, as classified by IUCN. It additionally builds on the coverage of species considered to be Data Deficient. Based on the emergent properties displayed by \Ours, there is potential to add to the effort to understand the risks facing these species that cannot currently be classified by IUCN due to lack of available information, as suggested in ~\cite{tuia2022perspectives}. These designations are crucial to the international effort to protect biodiversity across the planet.

Unfortunately, as with many open-source efforts to further conservation goals, there is potential for bad actors to make use of these tools for malicious purposes. Though the improvement on threatened species \textit{could} make it easier for poachers to identify protected species, these types of tools are a force-multiplier to monitor illicit trade and sales of these same species. The primary risk to endangered species comes from disclosure of precise location information rather than improved classification capability~\cite{lindenmayer2017do}. Our data does not provide geo-tagged information on the organisms included, minimizing the vulnerabilities that could be used in poaching.

\section{Proof of \autoref{theorem1}}\label{sec:proof-theorem1}
\begin{proof}
The contrastive loss for one visual embedding $\vz$ belonging to the class $s$ and the corresponding text embedding $\vc_s$ is:
\begin{equation}\label{eq:contrastive}
    l(\vz,\vc_s)=-\log\frac{\exp(\vz^\top \vc_s/\tau)}{\sum_k\exp(\vz^\top \vc_k/\tau)}.
\end{equation}
Let $h_\phi(L(\cdot))$ be the text encoder. Assume the representation is already close to the species prototype $\vmu_s=h_\phi(L(s))=\vc_s$, and there is a residual $\vdelta$ representing the intra-species variance:
\[
\vdelta:=\vz-\vmu_s,\quad \Vert\vdelta\Vert\ll\Vert\vmu_s-\vmu_{k\ne s}\Vert.
\]
Define:
\[
a_k:=\frac{\vmu_s^\top\vmu_k}{\tau},\quad Z=\sum_k\exp(a_k),\quad w_k:=\frac{\exp(a_k)}{Z}
\]
Substituting $\vz=\vmu_s+\vdelta$ into \autoref{eq:contrastive}:
\begin{align}\label{eq:el}
    l(\vmu_s+\vdelta,\vc_s)&=-\frac{(\vmu_s+\vdelta)^\top \vmu_s}{\tau}+\log\left[\sum_k\exp\left(\frac{(\vmu_s+\vdelta)^\top \vmu_k}{\tau}\right)\right]\\
    &=-\frac{\vmu_s^\top(\vmu_s+\vdelta)}{\tau}+\log\left[\sum_k\exp\left(\frac{\vmu_s^\top\vmu_k}{\tau}\right)\cdot\exp\left(\frac{\vdelta^\top\vmu_k}{\tau}\right)\right]\\
    &=-\frac{\vmu_s^\top \vmu_s}{\tau}-\frac{\vmu_s^\top \vdelta}{\tau}+\log\left[\sum_k\exp\!\left(\!\frac{\vmu_s^\top\vmu_k}{\tau}\!\right)\!\sum_k\frac{\exp(\vmu_s^\top\vmu_k/\tau)}{\sum_k\exp(\vmu_s^\top\vmu_k/\tau)}\cdot\exp\!\left(\!\frac{\vdelta^\top\vmu_k}{\tau}\!\right)\!\right]\\
    &=-\frac{\vmu_s^\top \vmu_s}{\tau}-\frac{\vmu_s^\top \vdelta}{\tau}+\log Z+\log\left[\sum_k w_k\exp\left(\frac{\vdelta^\top\vmu_k}{\tau}\right)\right].
\end{align}
Use the Taylor expansion to the second order for the argument of the logarithm of the last term $\Psi(\vdelta)=\log\left[\sum_k w_k\exp(\vdelta^\top\vmu_k/\tau)\right]$:
\begin{align}\label{eq:el2}
    \sum_k w_k\exp\left(\frac{\vdelta^\top\vmu_k}{\tau}\right)&=1+\sum_k w_k\frac{\vdelta^\top\vmu_k}{\tau}+\frac{1}{2}\sum_k w_k\frac{(\vdelta^\top\vmu_k)^2}{\tau^2}+O(\Vert\vdelta\Vert^3),\\
    \mbox{obtaining }\quad \Psi(\vdelta)&=\sum_k w_k\frac{\vdelta^\top\vmu_k}{\tau}+\frac{1}{2}\left[\sum_k w_k\frac{(\vdelta^\top\vmu_k)^2}{\tau^2}-\left(\sum_k w_k\frac{\vdelta^\top\vmu_k}{\tau}\right)^2\right]+O(\Vert\vdelta\Vert^3).
\end{align}

Insert $\Psi(\vdelta)$ back into Eq.\ref{eq:el}: 
\begin{align*}
l=l(\vmu_s,\vmu_s)&-\frac{\vmu_s^\top \vdelta}{\tau}+\sum_k w_k\frac{\vdelta^\top\vmu_k}{\tau}\\
&+\frac{1}{2}\left[\sum_k w_k\frac{(\vdelta^\top\vmu_k)^2}{\tau^2}-\left(\sum_k w_k\frac{\vdelta^\top\vmu_k}{\tau}\right)^2\right]+O(\Vert\vdelta\Vert^3).
\end{align*}
Define $\vm:=\sum_k w_k\vmu_k$. Then collecting first-order terms around $\vdelta$ results in the following:
\[
\frac{(\sum_{k}w_k\vmu_k-\vmu_s)^\top\vdelta}{\tau}=\frac{(\vm-\vmu_s)^\top\vdelta}{\tau}.
\]
Suppose the training resulted in $w_s\approx 1$, leading to $\vm\approx\vmu_s$. Then the first-order terms will vanish as the embeddings of different species are better separated. 

We further rewrite the second-order term of Eq.\ref{eq:el2} as:
\begin{align*}
\frac{1}{2}\left[\sum_k w_k\frac{(\vdelta^\top\vmu_k)^2}{\tau^2}-\left(\sum_k w_k\frac{\vdelta^\top\vmu_k}{\tau}\right)^2\right]&=\frac{1}{2}\left[\sum_k w_k\frac{(\vdelta^\top\vmu_k)^2}{\tau^2}-\frac{(\vdelta^\top\vm)^2}{\tau^2}\right]\\
&=\vdelta^\top\left[\frac{1}{2\tau^2}\left(\sum_k w_k\vmu_k\vmu_k^\top-\vm\vm^\top\right)\right]\vdelta.
\end{align*}
Substituting $\vm\approx\vmu_s$, we have an approximation as
\[
\vdelta^\top\left[\frac{1}{2\tau^2}\left(\sum_k w_k\vmu_k\vmu_k^\top-\vmu_s\vmu_s^\top\right)\right]\vdelta.
\]
The Hessian matrix lies in the span of species, as each term is an outer product of a prototype. Therefore, as long as the residual $\vdelta$ is distributed in a subspace orthogonal to the species differences, the scale of $\vdelta$ will not interfere with the overall contrastive optimization.
\end{proof}

\section{Training Implementation Details}\label{sec:appendix-training}

\begin{table}[h!]
    \centering
\begin{minipage}{0.48\textwidth}
\centering
    \caption{\small The adopted hyper-parameter setting in training \Ours.}
    \label{tab:hyper-param}
    \begin{tabular}{@{}lc@{}}
    \toprule
        Hyper-parameter & Value \\
    \midrule
        Architecture & ViT-L/14 \\
        Optimizer & Adam \\
        Batch size/GPU (organism) & $2{,}816$ \\
        Batch size/GPU (replay) & $320$ \\
        GPUs & $32$ H100s \\
        Epochs & $30$ \\
        Max learning rate & $1\times 10^{-4}$ \\
        Warm-up steps & $1{,}875$ \\
        Weight decay & $0.2$ \\
        Input resolution & $224$ \\
    \bottomrule
    \end{tabular}
\end{minipage}
\hfill
\begin{minipage}{0.48\textwidth}
\centering
    \caption{\small The adopted hyper-parameter setting in the ablation study.}
    \label{tab:hyper-param-abl}
    \begin{tabular}{@{}lc@{}}
    \toprule
        Hyper-parameter & Value \\
    \midrule
        Architecture & ViT-L/14 \\
        Optimizer & Adam \\
        Batch size/GPU (organism) & $2{,}816$ \\
        Batch size/GPU (replay) & $320$ \\
        GPUs & $8$ H100s \\
        Epochs & $100$ \\
        Max learning rate & $1\times 10^{-4}$ \\
        Warm-up steps & $1{,}500$ \\
        Weight decay & $0.2$ \\
        Input resolution & $224$ \\
    \bottomrule
    \end{tabular}
\end{minipage}
\end{table}

We list the adopted hyperparameters in \autoref{tab:hyper-param} and \autoref{tab:hyper-param-abl} for the \Ours training and ablation study, respectively. 
\OursDP is used for the ablation study. 
The batch size presented in both tables is the size per GPU. 
In addition to the larger GPU number and smaller batch size compared with the training in \OursP due to the larger model size, another important modification is the introduction of experience replay. An additional visual projector is introduced as described in \S\ref{sec:appendix-replay}. Beyond the visual projector, the model architecture is kept the same as that of CLIP~\cite{radford2021learning}. During the evaluation, all the embeddings are extracted with the visual projector for hierarchical label matching to avoid extra influence. 

\section{Baseline Details}\label{sec:appendix-baseline}
In the quantitative experiments, we compare \Ours with the following baseline vision-language models and vision-only models. Without specification, the input image size is $224$.
\begin{itemize}
    \item \textbf{CLIP (ViT-L/14).} We compare \Ours with the CLIP model pre-trained on LAION-2B~\cite{schuhmann2022laionb}, which has the same architecture and patch size. The CLIP model is also used as the weight initialization of the \Ours training. It uses a ViT-large visual encoder with a patch size of $14$. We load the \href{https://huggingface.co/google/siglip-large-patch16-256}{weight} from the \href{https://github.com/mlfoundations/open_clip}{OpenCLIP} repository. 
    \item \textbf{SigLIP.} In addition to the standard CLIP model, we also evaluate the performance of SigLIP~\cite{zhai2023sigmoid}. We adopt the SigLIP model pre-trained on WebLI data~\cite{chenpali}. The adopted visual encoder is ViT-large~\cite{dosovitskiy2021an}, the patch size is 16, and the input image resolution is $256$. The model \href{https://huggingface.co/laion/CLIP-ViT-L-14-laion2B-s32B-b82K}{weight} is also loaded from the \href{https://github.com/mlfoundations/open_clip}{OpenCLIP} repository.
    \item \textbf{Supervised-IN21K.} For vision-only models, we first select a ViT-large model~\cite{dosovitskiy2021an} trained in a supervised way on ImageNet-21K dataset~\cite{deng2009imagenet}. As it is a vision-only model, we only run a few-shot and non-species classification tasks with it. The patch size is $32$. The model is publicly downloadable in \href{https://huggingface.co/timm/vit_large_patch32_224.orig_in21k}{Hugging Face}.
    \item \textbf{DINOv3.} Besides supervised pre-training, we also evaluate the performance of DINOv3, which is pre-trained in an unsupervised way~\cite{simeoni2025dinov3}. The backbone architecture is ViT-large, and the patch size is $16$. The model can be downloaded from \href{https://huggingface.co/facebook/dinov3-vitl16-pretrain-lvd1689m}{Hugging Face}. Similarly, we only run few-shot and non-species classification evaluations with DINOv3. 
    \item \textbf{BioTrove-CLIP.} The above four models are trained on general knowledge covering a variety of topics. We also compare \Ours with domain-specific models. BioTrove-CLIP is trained with BioTrove~\cite{yang2024biotrove}. The model weights can be downloaded from \href{https://huggingface.co/BGLab/BioTrove-CLIP}{Hugging Face}. Among the provided three models, we use the model initialized with OpenAI CLIP~\cite{radford2021learning} that yields the best average accuracy~\cite{yang2024biotrove}. The visual backbone is ViT-base with a patch size of $16$.
    \item \textbf{\OursP.} \OursP is trained on \OursDP. It adopts ViT-base as the visual encoder, with a patch size of 16. The model weight is publicly available in \href{https://huggingface.co/imageomics/bioclip}{Hugging Face}.
\end{itemize}

\section{More Empirical Observations}\label{sec:appendix-empirical}
In this section, we present more empirical observations of the training design, detailed numerical results, and qualitative analyses. 

\begin{table}[t]
    \caption{
        \small Performance comparison between different replay designs of CLIP training data. All the models in the bottom three rows are initialized with CLIP (the first row) and trained with \OursDP data. The CLIP model is pre-trained with LAION-2B data, from which we randomly select 2M samples for this experiment. $\Delta$ represents the performance gap over the CLIP baseline. \textbf{Bold} entries indicate the best accuracy.
    }
    \label{tab:replay}
    \setlength\tabcolsep{7pt}
    \newcommand{\faded}{\color{gray}}
    \centering
    \small
    \newcommand{\first}{\bfseries}
    \resizebox{\textwidth}{!}{
    \begin{tabular}{@{}lRRRRRRRRRR@{}}
        \toprule
         & \multicolumn{2}{c}{\thead{Species}} & \multicolumn{5}{c}{\thead{Non-Species}} & & \\
        \cmidrule(lr){2-3} \cmidrule(lr){4-8}
        \thead{Model} & \vertical{NABirds} & \vertical{Rarespecies} & \vertical{FishNet} & \vertical{NeWT} & \vertical{AWA2} & \vertical{Herb. 19} & \vertical{PlantDoc} & \vertical{INQUIRE} & \multicolumn{2}{c}{Mean ($\Delta$)} \\
        \midrule
        CLIP (ViT-L/14) & 66.5 & 35.2 & 27.9_{\pm 0.2} & 83.4_{\pm 0.1} & 61.6_{\pm 0.6} & 18.2_{\pm 0.1} & 22.3_{\pm 3.3} & 35.0 & 43.8 & \text{--} \\
        No Replay & 68.9 & 46.1 & \mathbf{35.3}_{\pm 0.1} & 83.8_{\pm 0.1} & 58.0_{\pm 2.8} & 29.4_{\pm 0.4} & 36.0_{\pm 2.8} & 34.4 & 49.0 & \textcolor{teal}{\uparrow 5.2} \\
        Single-proj & 68.8 & 44.8 & 34.4_{\pm 0.2} & \mathbf{84.5}_{\pm 0.2} & \mathbf{58.1}_{\pm 1.4} & \mathbf{31.0}_{\pm 0.2} & 38.1_{\pm 2.4} & 34.7 & 49.3 & \textcolor{teal}{\uparrow 5.5} \\
        Separate-proj (Ours) & \mathbf{71.2} & \mathbf{47.2} & 35.1_{\pm 0.1} & 84.2_{\pm 0.1} & 57.4_{\pm 0.9} & 30.4_{\pm 0.3} & \mathbf{38.7}_{\pm 3.7} & \mathbf{37.1} & \mathbf{50.2} & \textcolor{teal}{\uparrow 6.4} \\
        \bottomrule
    \end{tabular}
    }
\end{table}

\subsection{CLIP Training Data Replay}\label{sec:appendix-replay}
Together with the contrastive supervision of hierarchical labels, we also introduce the replay of CLIP training data to retain the understanding of general knowledge. For each experiment of different training scales, we randomly select a subset from LAION-2B with $10\%$-$20\%$ of the corresponding biological image total. Specifically, for the largest run on $214$M biological images, we select $26$M samples from LAION-2B. Each training batch consists of $69{,}312$ biological images and $8{,}192$ replay samples. 
However, the text labels in \OursD are primarily different forms of taxonomic names, which have a distributional gap from the CLIP training data. Therefore, we apply a separate visual projector specifically for the replay data to avoid the optimization conflict. Other than this difference, the biological data and replay data share the same visual backbone and text encoder. 

We evaluate different replay settings quantitatively in \autoref{tab:replay}. The ``No Replay'' row shows the baseline performance applying biological contrastive training on top of the pre-trained CLIP model, where a $5.2\%$ performance improvement is achieved. When the replay data is added, which shares the visual projector with biological images, we observe a conflicting performance change. On Herb. 19~\cite{herbarium-2019-fgvc6} and PlantDoc~\cite{singh2020plantdoc}, more than $2\%$ improvement is acquired. However, the species classification accuracy on Rare Species drops by $1.3\%$. We attribute the inconsistent performance change to the distribution gap between biological and replay text labels. 
After applying a separate visual projector for the replay data, we observe overall improved performance across multiple benchmarks. At the same time, we also notice that replay has limited influence on benchmarks like AwA2~\cite{xian2018zero}. 
Additionally, we evaluate the preservation of general knowledge understanding on INQUIRE-Rerank~\cite{vendrow2024inquire}. Applying contrastive training with taxonomic labels slightly hurts the performance, while the single-projector replay fails to retain the understanding. Comparatively, the adopted separate-projector replay improves the performance by $2.1\%$. Given that there is still a distribution gap between the taxonomic labels and natural language, we do not treat INQUIRE as one of our main focuses in this work. 

\begin{figure}[t]
    \centering
    \includegraphics[width=\linewidth]{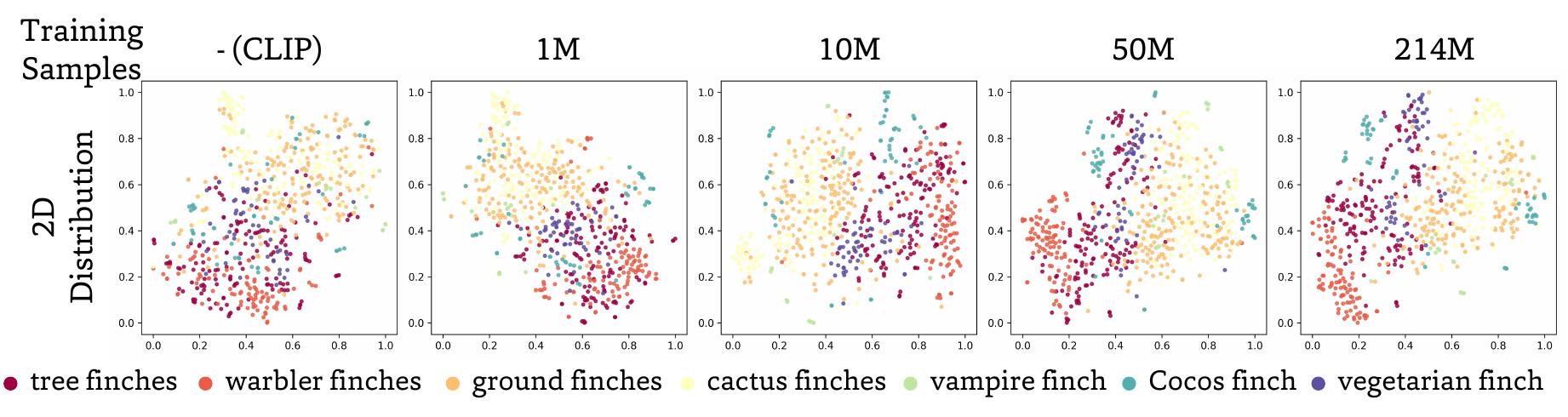}
    \caption{\small The t-SNE distribution of Darwin's finches under different scales of training data. Different colors represent different groups of finch species. As the training data scales up, the embeddings form biologically meaningful clusters that align with the phylogenetic tree and their functional traits. }
    \label{fig:scale-dist-finch}
\end{figure}

\subsection{Qualitative Analyses}\label{sec:appendix-qualitative}

\mypara{Embedding distribution of Darwin's finches.}
In \autoref{fig:teaser}, we show that the embedding distribution of Darwin's finches aligns with the beak size. Here we further visualize the distribution under different training scales in \autoref{fig:scale-dist-finch}. 
Based on the genome-based phylogeny, warbler finches were the most ancient branches, while tree finches and ground finches form the recent branches~\cite{lamichhaney2015evolution}. Among these species, warbler finches have the smallest beak, convenient for extracting tiny arthropods from leaves. Comparatively, ground finches have larger beaks, which are more suitable for cracking seeds and nuts. In the original CLIP embedding space, the warbler finches and tree finches are mixed. As the training data scales up, these two groups are separated, and the relative geometric relationship of all the finches aligns with their phylogeny tree. 
While the species separation is induced by taxonomic supervision, \Ours's embedding space again illustrates emergent higher-level biological meaningfulness after extensive training. 

\begin{figure}[t]
    \centering
    \includegraphics[width=\linewidth]{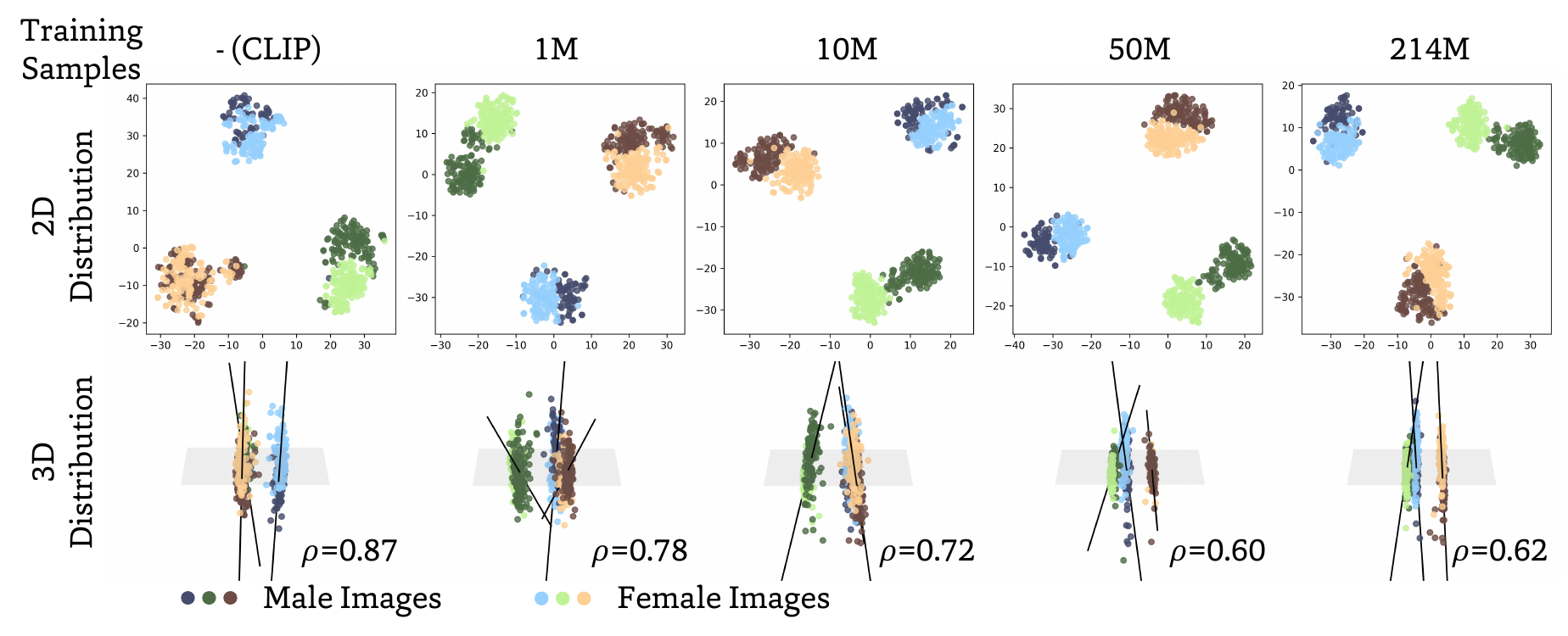}
    \caption{\small The embedding distribution of sex variations under different scales of training data. The 2D distributions are obtained using t-SNE. For the 3D distributions, we first run SVD with the mean embedding of each species. The first two singular vectors are used to construct the gray plane that captures most inter-species differences. The embeddings are then projected into the 3D space with an additional orthogonal dimension. The straight lines point from the mean embedding of male images to that of the female images. As the training data scales up, the intra-species variations are preserved in the subspace orthogonal to the inter-species differences. Orthogonality improves with data scale, as evidenced by the decreasing explained-variance ratio $\rho$. }
    \label{fig:scale-dist-sex}
\end{figure}

\mypara{Embedding distribution of Sex data.}
Similar to \autoref{fig:scale-dist}, we visualize the 2D and 3D distributions of embeddings from 3 species of the Sex data in \autoref{fig:scale-dist-sex}. We can draw conclusions similar to those obtained from \autoref{fig:scale-dist}. When no training data is incorporated, the embeddings extracted by the original CLIP visual encoder (leftmost sub-figure) demonstrate large portions of overlap between male and female images. After extensive vision-language contrastive training, the embeddings present clear decision boundaries between the two variants. 
Furthermore, as evidenced in the 3D distribution, the embeddings within each species form more compact clusters when projected onto the species span (the gray plane). Comparatively, instead of being eliminated after contrastive training, the intra-species variations are embedded in the subspace orthogonal to the inter-species differences. 
The extensive training facilitates \Ours to acquire a biologically meaningful embedding space, highlighting its value in serving as a biology foundation model. 

\begin{figure}[t]
    \centering
    \includegraphics[width=\linewidth]{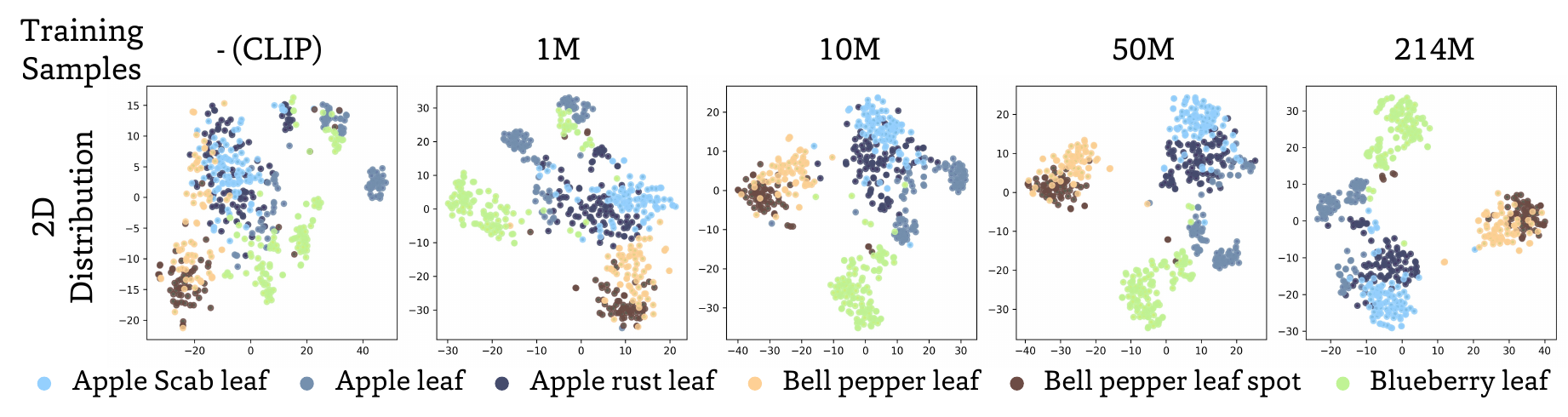}
\vspace{-12pt}
    \caption{\small The t-SNE distribution of 6 classes from the PlantDoc dataset, including three species and three different diseases. As the training data scales up, not only are the species better separated, but the intra-species variations also form clusters, making them easier to separate. }
    \label{fig:scale-dist-plantdoc}
\vspace{-6pt}
\end{figure}

\mypara{Embedding distribution of PlantDoc data.}
We further visualize the distribution of 6 classes from the PlantDoc dataset in \autoref{fig:scale-dist-plantdoc}. When no training is processed (leftmost sub-figure), embeddings of different species, as well as diseases, are mixed. As the training scale increases, we observe two trends. First, the margin between different species is enlarged, and the embeddings belonging to the same species are clustered together. Second, the diseased leaves are easier to separate within each species, although not explicitly constrained during training. More interestingly, the embeddings of healthy apple leaves are distributed close to the healthy blueberry leaves. These observations again highlight the biologically meaningful embedding space of \Ours after extensive training.

\begin{figure}[t]
    \centering
\begin{subfigure}[c]{0.48\textwidth}
    \includegraphics[width=\linewidth]{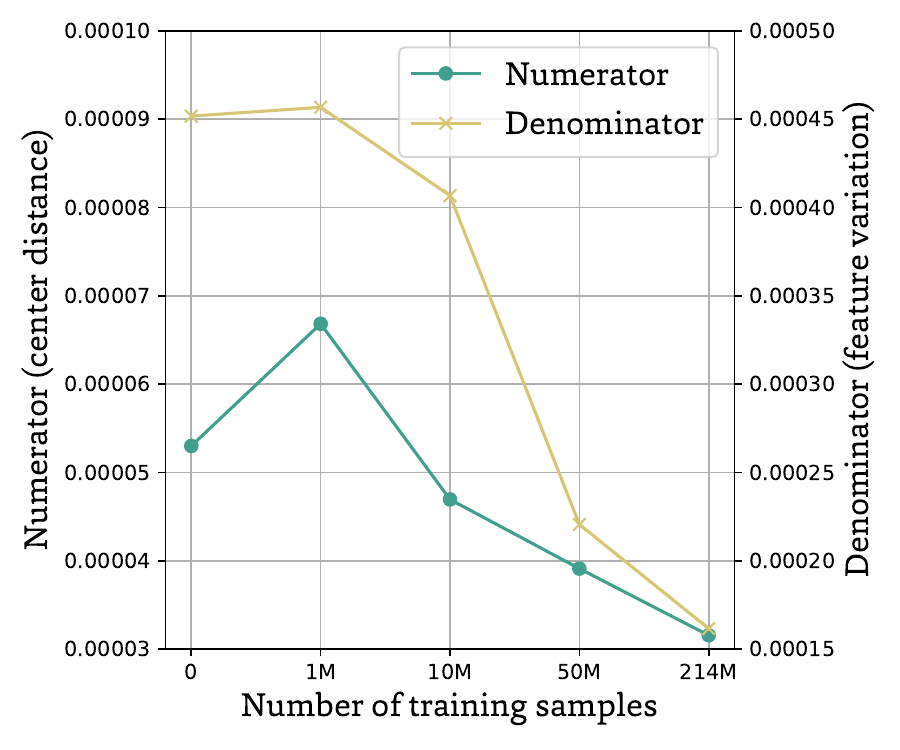}
\vspace{-12pt}
    \caption{}\label{fig:nd-age}
\end{subfigure}
\hfill
\begin{subfigure}[c]{0.48\textwidth}
    \includegraphics[width=\linewidth]{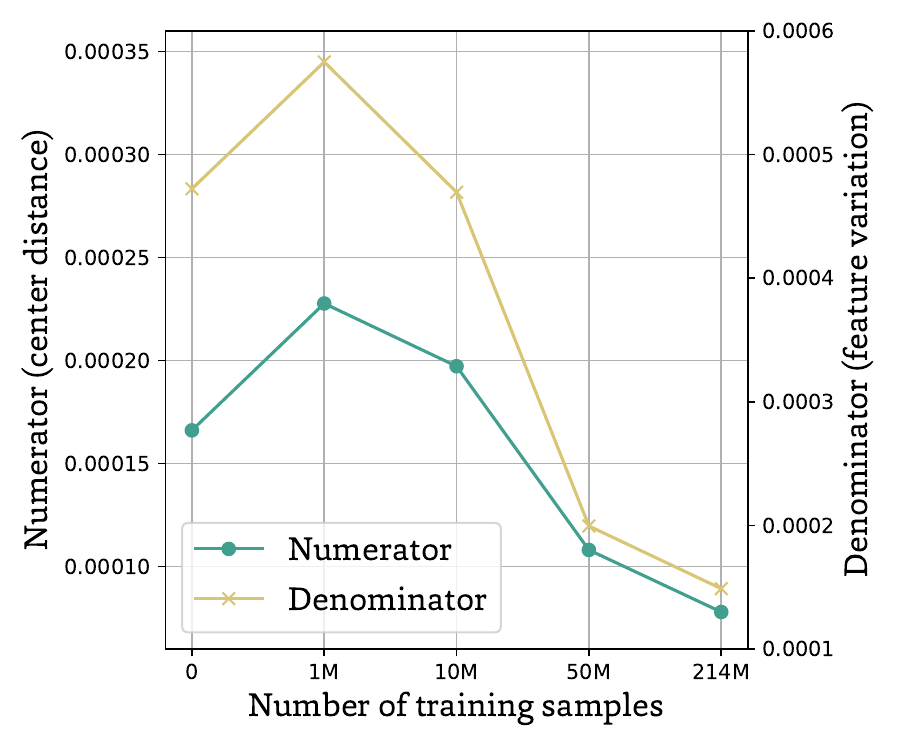}
\vspace{-12pt}
    \caption{}\label{fig:nd-sex}
\end{subfigure}
\vspace{-8pt}
    \caption{\small The decay curves for the numerator (difference between two class centers) and the denominator (feature variation) terms of the FDR metric along the increasing data scale. (a) The curves of life stage data. (b) The curves of sex data. }
\vspace{-6pt}
\end{figure}

\subsection{The decay of FDR numerator/denominator along the data scale.}
We observe that FDR between two intra-species variation classes is increasing as the training data scales up. More specifically, we look into the numerator term (the difference between two class centers) and the denominator term (the variation of features). We visualize the curve of the numerator and denominator terms in \autoref{fig:nd-age} and \autoref{fig:nd-sex} for life stage data and sex data, respectively. The y-axes are scaled to the same maximum ratio to the minimum value. 
As the number of training samples scales up from $1$M to $214$M, the denominator term goes through a larger decay than the numerator term. The numerical results further support the increasing FDR trend. 

\begin{table}[t]
    \caption{
        \small Zero-, one-, and five-shot species classification accuracy of \href{https://huggingface.co/datasets/imageomics/IDLE-OO-Camera-Traps}{IDLE-OO Camera Traps} from the Labeled Information Library of Alexandria: Biology and Conservation (LILA-BC)~\cite{lila-datasets,island-ct,lion-ct,velez2022choosing-orinoquia,balasubramaniam2024-oh-small,yousif2019dynamic-ENA24} for different models. \textbf{Bold} and \underline{underlined} entries indicate the \textbf{best} and \underline{second best} accuracies, respectively.
    }
    \label{tab:lila}
    \setlength\tabcolsep{4pt}
    \newcommand{\faded}{\color{gray}}
    \centering
    \small
    \newcommand{\first}{\bfseries}
    % \resizebox{\textwidth}{!}{
    \begin{tabular}{@{}lRRRRRR@{}}
        \toprule
         & \multicolumn{5}{c}{\thead{LILA-BC}} & \\  
        \cmidrule(lr){2-6}
        \thead{Model} & \vertical{Desert Lion} & \vertical{ENA24} & \vertical{Island} & \vertical{Orinoquia} & \vertical{OH Small Animals} & \text{Mean} \\
        \midrule
        \faded Random Guessing & \faded 3.1\hspace*{1em} & \faded 5.0\hspace*{1em} & \faded 3.0\hspace*{1em} & \faded 3.6\hspace*{1em} & \faded 2.6\hspace*{1em} & \faded 3.5 \\
        \midrule
        \multicolumn{7}{l}{\textit{Zero-Shot Classification}} \\
        \midrule
        CLIP (ViT-L/14) & 35.2\hspace*{1em} & 38.2\hspace*{1em} & 27.1\hspace*{1em} & 25.6\hspace*{1em} & 21.6\hspace*{1em} & 29.5 \\ 
        SigLIP & 46.9\hspace*{1em} & 41.0\hspace*{1em} & \underline{28.1}\hspace*{1em} & \underline{31.9}\hspace*{1em} & 20.5\hspace*{1em} & \underline{33.7} \\
        BioTrove-CLIP & 9.7\hspace*{1em} & 10.4\hspace*{1em} & 13.2\hspace*{1em} & 8.0\hspace*{1em} & 12.4\hspace*{1em} & 10.7 \\
        \OursP & \underline{47.2}\hspace*{1.1em} & \underline{42.5}\hspace*{1em} & 18.4\hspace*{1em} & 27.1\hspace*{1em} & \underline{23.1}\hspace*{1em} & 31.7 \\
        \Ours & \mathbf{58.8}\hspace*{1em} & \mathbf{68.5}\hspace*{1em} & \mathbf{42.6}\hspace*{1em} & \mathbf{47.9}\hspace*{1em} & \mathbf{51.5}\hspace*{1em} & \mathbf{53.9} \\
        \midrule
        \multicolumn{7}{l}{\textit{One-Shot Classification}} \\
        \midrule
        CLIP (ViT-L/14) & 37.6_{\pm 2.5} & 35.9_{\pm 3.8} & 44.7_{\pm 3.0} & 31.3_{\pm 2.7} & 30.7_{\pm 2.0} & 36.0 \\
        SigLIP & 41.1_{\pm 1.8} & \underline{39.1}_{\pm 4.0} & 48.5_{\pm 2.9} & 32.7_{\pm 1.6} & 27.6_{\pm 2.5} & 37.8 \\
        Supervised-IN21K & 32.4_{\pm 2.0} & 28.4_{\pm 3.2} & 40.1_{\pm 2.2} & 26.3_{\pm 2.4} & 26.0_{\pm 2.4} & 30.6 \\
        % DINOv2 & \mathbf{56.3}_{\pm 0.8} & \mathbf{48.9}_{\pm 2.7} & 48.2_{\pm 2.4} & \mathbf{43.1}_{\pm 3.4} & \underline{41.9}_{\pm 3.3} & \underline{47.7} \\
        DINOv3 & \underline{48.1}_{\pm 1.3} & 38.5_{\pm 3.2} & \mathbf{50.5}_{\pm 2.8} & \underline{40.0}_{\pm 2.6} & \underline{36.1}_{\pm 2.3} & \underline{42.6} \\
        BioTrove-CLIP & 30.5_{\pm 1.7} & 30.5_{\pm 2.2} & 37.9_{\pm 2.1} & 29.0_{\pm 2.6} & 28.3_{\pm 3.1} & 31.2 \\
        \OursP & 39.9_{\pm 2.2} & 34.4_{\pm 2.7} & 45.7_{\pm 3.0} & 27.5_{\pm 3.7} & 27.5_{\pm 2.2} & 35.0 \\
        \Ours & \mathbf{54.3}_{\pm 1.1} & \mathbf{48.6}_{\pm 2.3} & \underline{49.5}_{\pm 2.5} & \mathbf{43.1}_{\pm 2.1} & \mathbf{45.0}_{\pm 3.0} & \mathbf{48.1} \\
        \midrule
        \multicolumn{7}{l}{\textit{Five-Shot Classification}} \\
        \midrule
        CLIP (ViT-L/14) & 58.3_{\pm 1.7} & 57.8_{\pm 2.6} & 66.1_{\pm 2.4} & 43.9_{\pm 2.9} & 43.5_{\pm 1.6} & 53.9 \\
        SigLIP & 64.3_{\pm 2.2} & 60.0_{\pm 2.7} & \underline{71.6}_{\pm 1.4} & 46.2_{\pm 1.8} & 42.6_{\pm 1.7} & 56.9 \\
        Supervised-IN21K & 51.3_{\pm 2.3} & 48.5_{\pm 2.6} & 59.0_{\pm 1.3} & 39.3_{\pm 3.4} & 41.9_{\pm 1.6} & 48.0 \\
        % DINOv2 & \underline{71.9}_{\pm 2.1} & \mathbf{73.5}_{\pm 1.4} & 69.2_{\pm 0.5} & \mathbf{61.2}_{\pm 1.6} & \underline{58.3}_{\pm 1.9} & \underline{66.8} \\
        DINOv3 & \underline{66.3}_{\pm 3.3} & \underline{63.2}_{\pm 3.1} & \mathbf{73.4}_{\pm 1.3} & \mathbf{59.9}_{\pm 3.7} & \underline{53.8}_{\pm 2.4} & \underline{63.3} \\
        BioTrove-CLIP & 47.6_{\pm 3.0} & 46.7_{\pm 2.1} & 62.2_{\pm 1.1} & 41.1_{\pm 1.6} & 41.8_{\pm 2.4} & 47.9 \\
        \OursP & 62.5_{\pm 1.7} & 57.0_{\pm 3.2} & 63.2_{\pm 3.8} & 44.6_{\pm 1.7} & 44.0_{\pm 0.5} & 54.3 \\
        \Ours & \mathbf{73.4}_{\pm 2.9} & \mathbf{73.4}_{\pm 0.9} & 70.7_{\pm 1.6} & \underline{59.4}_{\pm 2.8} & \mathbf{61.8}_{\pm 1.3} & \mathbf{67.7} \\
        \bottomrule
    \end{tabular}
    % }
    % \vspace{-8pt}
\end{table}

\subsection{Camera Trap Results}\label{sec:appendix-lila}
In addition to the species classification benchmarks adopted in~\cite{stevens2024bioclip}, we further introduce a balanced camera trap image benchmark for species classification, \href{https://huggingface.co/datasets/imageomics/IDLE-OO-Camera-Traps}{IDLE-OO Camera Traps}, derived from LILA-BC datasets~\cite{lila-datasets}, to construct a more realistic application scenario. Specifically, 
we select five datasets from LILA-BC that are labeled to the image level to avoid testing on noisy images---those labeled as containing an animal when it is simply the animal's habitat. The Island Conservation Camera Traps~\cite{island-ct} were of particular interest for their stated purpose of assisting in the prevention of endangered island species' extinction and the varied ecosystems represented. This provides a fine-grained complement to the Rare Species test set~\cite{stevens2024bioclip,rare_species_2023}. The Desert Lion Conservation Camera Traps dataset~\cite{lion-ct} is similarly intended to advance conservation efforts. With the Orinoquía Camera Traps~\cite{velez2022choosing-orinoquia}, Ohio Small Animals~\cite{balasubramaniam2024-oh-small}, and ENA24~\cite{yousif2019dynamic-ENA24} camera trap datasets, we can test on camera trap images across varied settings. The Island Conservation set uses common names, while the remaining datasets are reduced to just those that are labeled to the species level, and then sampled to create a balanced test set across the remaining classes (these are evaluated on scientific names). The sampling and image manifests are provided in \href{https://huggingface.co/datasets/imageomics/IDLE-OO-Camera-Traps}{IDLE-OO Camera Traps}. 
We report the detailed accuracy on each camera trap dataset in \autoref{tab:lila}. 

\section{Discussion on the Relationship with Neural Collapse}\label{sec:appendix-collapse}
Neural collapse is a status where the intra-class variations collapse to zero, and the embeddings collapse to their corresponding class prototypes~\cite{kothapallineural,papyan2020prevalence}.
The class prototypes collapse to the vertices of a simplex Equiangular Tight Frame (ETF). 
It has been empirically observed and theoretically proved that the commonly used loss functions---including cross-entropy loss and supervised contrastive loss~\cite{khosla2020supervised}---lead to neural collapse at the terminal phase of training~\cite{zhou2022all,lu2022neural,gill2024engineering,zhu2021geometric}.
If neural collapse happens, the intra-species appearance variations will be hardly separable. 
However, we reveal that after extensive training of \Ours, the intra-species variations are preserved in the subspace orthogonal to inter-species differences, indicating that \Ours does not suffer neural collapse. 

\mypara{Why \Ours does not lead to Neural Collapse.}
We summarize two key reasons that allow the existence of the intra-species variation subspace in \Ours. 

First, the class prototypes in standard classification tasks and supervised contrastive learning (SCL) can both be treated as fully trainable parameters~\cite{khosla2020supervised}. In standard classification, the prototypes are the weights of the linear classifier. In SCL, they are simply the mean embedding of the corresponding classes. The class prototypes will form a simplex ETF after extensive training. Comparatively, the adopted contrastive training scheme in \Ours employs text embeddings of the taxonomic labels as class prototypes. During the creation of taxonomic labels, hierarchical structures have been naturally embedded into them based on ecological and functional evidence. Different species can share higher taxonomic levels, which makes them hard negatives. Therefore, even if the text encoder is also being trained, the generated prototypes will not become ETF. 

Second, \OursD poses an enormous label space, where $952$K classes are involved in training. Typical SCL is performed upon CIFAR or ImageNet-level datasets, consisting of $100-1{,}000$ classes. Previous works analyzing neural collapse usually assume the dimension of embeddings is larger than the class number and the sample number is balanced across different classes~\cite{lu2022neural,papyan2020prevalence}. In contrast, the class number involved in \Ours is much larger than the embedding dimension ($768$ channels). These conditions restrict the possibility of prototypes forming ETF.

\section{Biological Visual Evaluation Details}\label{sec:downstream-detail}
Instead of training specialist models, we treat the evaluated models as frozen visual embedding extractors. Standard machine learning algorithms are applied on top of the acquired visual embeddings to predict the corresponding labels. Such a design is adopted to evaluate the quality of the embeddings while avoiding the influence of complicated optimization loops. In the following, we introduce the details of the adopted benchmarks and the evaluation algorithms. All the experiments are conducted with 1 NVIDIA A100 GPU, and the running time for each task is within 30 minutes. 

\mypara{FishNet.} FishNet focuses on recognizing, locating, and predicting species and their functional traits~\cite{khan2023fishnet}. Specifically, $94{,}532$ images are collected with annotations of habitat, ecological role, and nutritional value. In this work, we mainly focus on the prediction of habitats and ecological roles, involving 9 groups of binary labels (\eg, whether the fish can live in freshwater). Following the practice in the original paper, we train a two-layer linear classifier with binary cross-entropy loss to predict the 9 labels. We count a correct prediction only if all the 9 labels are predicted correctly for the sample. The original train-test split is adopted, where $75{,}631$ images are used in training, and the remaining $18{,}901$ images are used for testing. This task evaluates whether the embedding distribution of different species is aligned with their ecological relationships. 

\mypara{NeWT.} NeWT comprises $164$ binary classification tasks in the natural world~\cite{van2021benchmarking}. The tasks include appearance, gestalt, context, counting, and behavior concepts. In each task, $50$-$100$ images are assigned per class per train-test split. After extracting visual embeddings, we apply a support vector classifier for each of the tasks. The average accuracy is reported across all the evaluated tasks. 

\mypara{AwA2.} AwA2 consists of $37{,}322$ images of $50$ animal classes, with annotations of $85$ numeric attribute values for each class~\cite{xian2018zero}. The dataset can be used for testing the capability of attribute-based classification and zero-shot transfer learning of trait prediction. In this work, we mainly focus on the transfer learning scheme. The training set of $45$ classes is used to train an attribute classifier, and the remaining $5$ unseen classes are used for testing. Similar to FishNet, we incorporate a linear classifier on top of the extracted features to predict the binary labels for the $85$ attributes. The average F1 score over all the attributes is reported for this benchmark. 

\mypara{Herb. 19.} Herbarium 19 is a task for discovering new species~\cite{herbarium-2019-fgvc6}. Specifically, given the images of known and unknown species, the model is required to predict the labels for both of them. As there are no fixed labels for unknown species, the task is implemented with a form of semi-supervised clustering~\cite{vaze2022generalized}. Given the total number of species, a semi-supervised K-means algorithm is conducted on top of the extracted embeddings to cluster images. Clustering accuracy is calculated for the predictions following the original practice~\cite{vaze2022generalized}. 

\mypara{PlantDoc.} PlantDoc is a dataset targeting the incorporation of computer vision for scalable and early plant disease detection~\cite{singh2020plantdoc}. $2{,}598$ images of $13$ plant species and up to $17$ classes of diseases are collected in uncontrolled natural settings. We evaluate the model on PlantDoc in a few-shot learning style. One image per class is randomly selected from the original training split as the support set, and SimpleShot~\cite{wang2019simpleshot} is employed to predict the class labels for the test set. Accuracy over all the testing samples is reported as the performance. 

\mypara{Life stage-Diff/Align.} We use the data of Age tasks from NeWT~\cite{van2021benchmarking} to construct the Life stage-Diff/Align benchmark, where images are labeled with juvenile and adult classes. Specifically, the differentiation tasks aim to separate the binary appearance variations within each species. Conversely, in the alignment task, we train a species classifier with juvenile images while testing it using adult images. It requires the embeddings of two variations within one species to be closer than the embeddings of different species. For both of the tasks, we incorporate a support vector classifier to predict the corresponding labels. 
Ideally, after extensive contrastive training, the embeddings of different variation classes are expected to collapse to the species prototype. However, we demonstrate that the intra-species variations are well preserved in the embedding space of \Ours.

\mypara{Sex-Diff/Align.} NABirds consists of $48{,}000$ images from $400$ bird species~\cite{van2015building}. Sex and life stage labels are provided for those species with large appearance variations. We manually examine the images and select $81$ species with male-female differences to construct the Sex-Diff/Align benchmark, where $13{,}624$ images are used in total. 
For the differentiation task, we filter out $20$ male images and $20$ female images per species as the test set. Among the remaining $10{,}384$ images, we filter out at most $20$ male and $20$ female images for training. 
For the alignment setting, we use the images of female birds to train a species classifier and use the male images for testing. 
Similar to the life stage benchmark, we use a support vector classifier for both of the sub-tasks. 

\section{Data Processing Details}\label{sec:appendix-data}

In this section, we provide more details on the data curation process.

\begin{figure}[t]
\centering
\begin{subfigure}[c]{0.49\textwidth}
    \includegraphics[width=\linewidth]{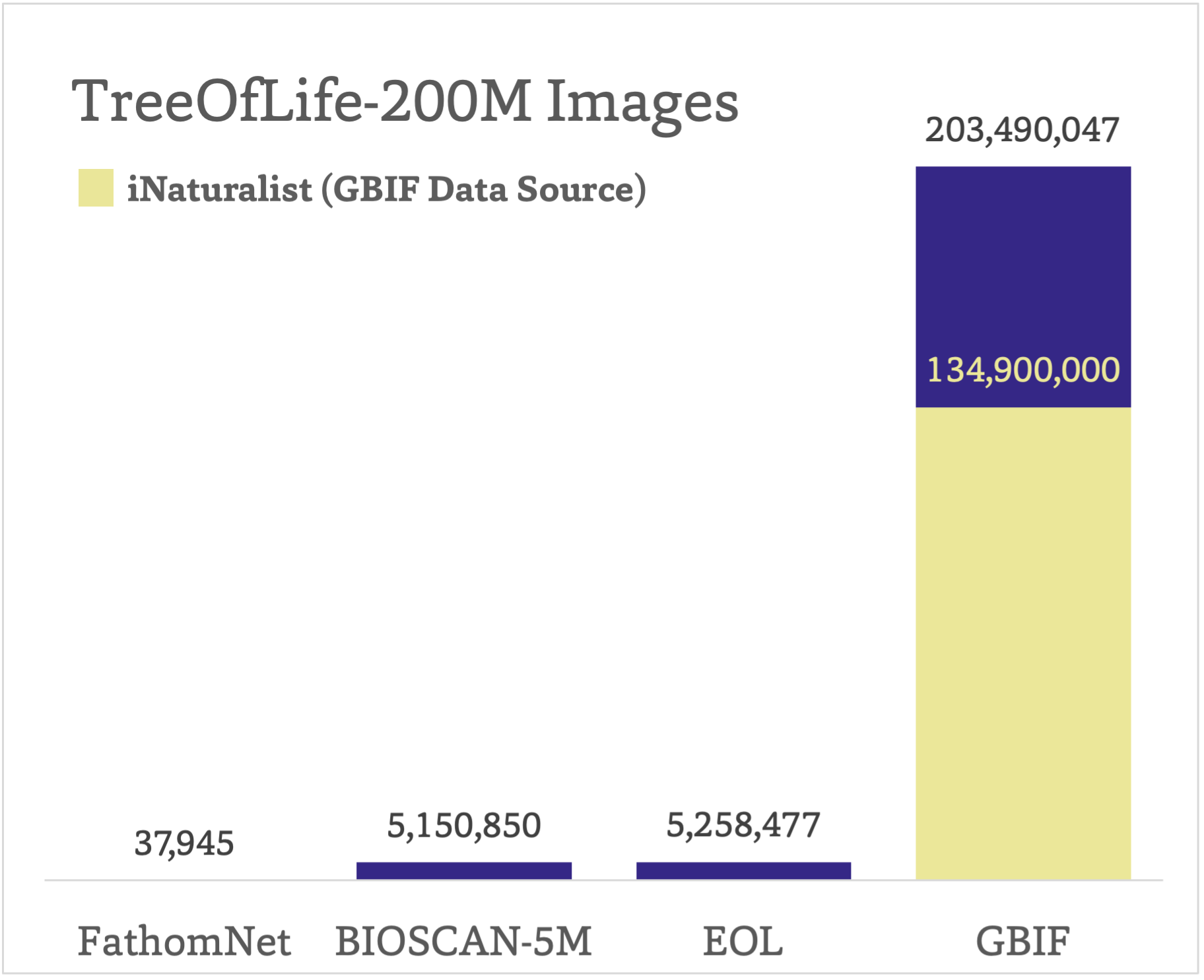}
    \caption{}
\end{subfigure}
\hfill
\begin{subfigure}[c]{0.49\textwidth}
    \includegraphics[width=\linewidth]{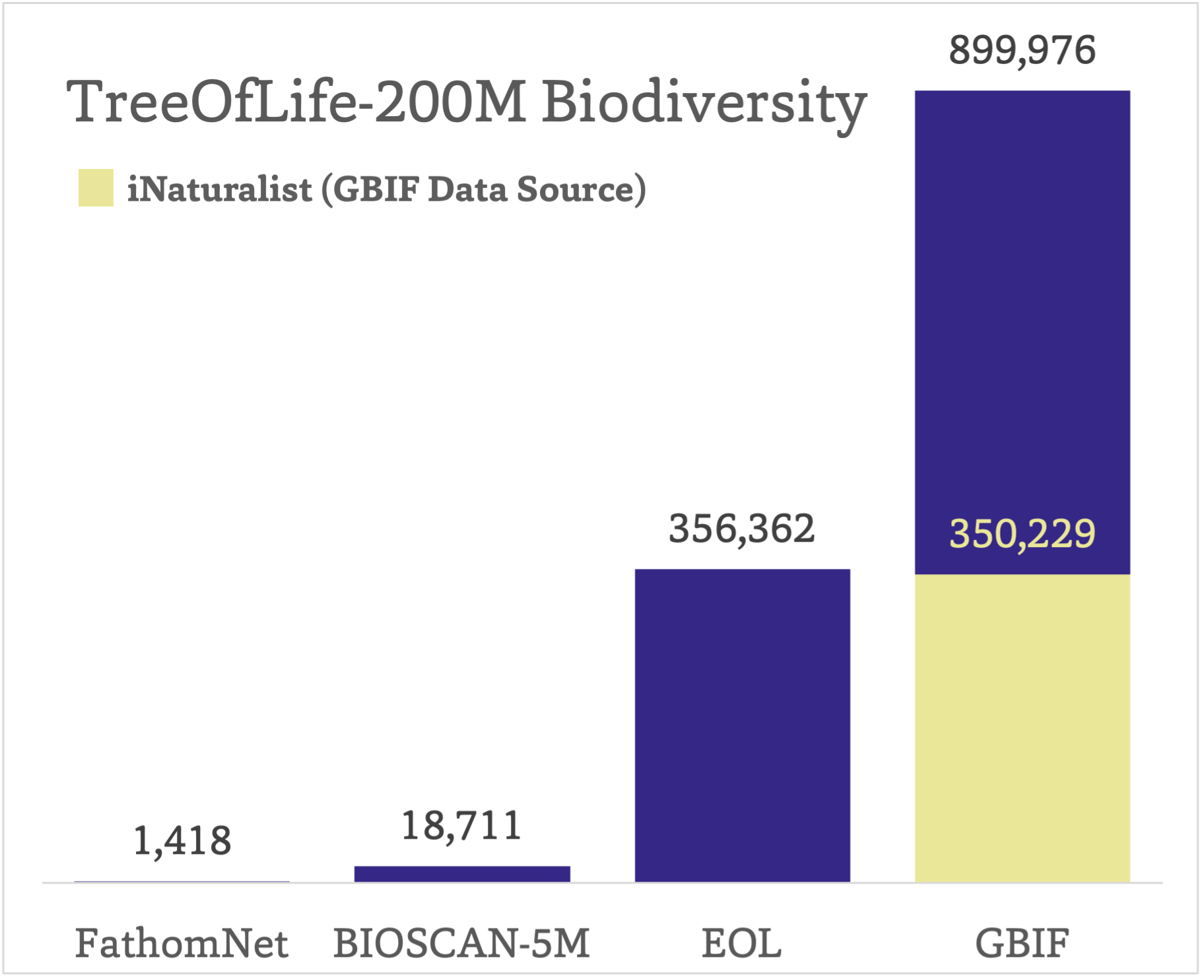}
    \caption{}
\end{subfigure}
\hfill
\vspace{-4pt}
\caption{
\small The number of images (a) and unique 7-rank taxa (b) by core data provider in \OursD. iNaturalist~\cite{iNat-research-grade} is the largest GBIF data source (by image count) and the data provider used by BioTrove~\cite{yang2024biotrove}, so it is included here for reference.}
\label{fig:training-data}
\vspace{-8pt}
\end{figure}

\subsection{Taxonomic Standardization}

When GNVerifier returns a result from a query, our taxonomic alignment package combines it with the input taxonomic hierarchy, along with the query parameter, to form a resolution attempt. The query response, based on the most specific taxonomic term available in the input data, along with the remaining input hierarchy (entry's resolution attempt), is then matched against pre-defined profiles. 
The algorithm uses these three components to fit against the series of profiles to determine whether a confident resolution is found or if an alternative query strategy is needed (such as using a different query term or data source), iterating until a match is made or alternative approaches are exhausted. If all attempts ultimately fail to match a profile, the original input data is used for the entry's label in the final output. After hierarchical resolution, common names are annotated. For each entry, the common name corresponding to its most specific resolved taxonomic term is selected, when available, using the preferred English vernacular names from the GBIF Backbone Taxonomy~\cite{GBIF_Backbone_2023}. 
In the output, the proportion of records changed for each data source is EOL: $98.7$\%, FathomNet: $16.0$\%, BIOSCAN: $11.8$\%, and GBIF: $0.3$\%. The notably low modification rate for GBIF reflects our taxonomic alignment package's preference for the GBIF Backbone Taxonomy as its primary reference.
After standardization, we summarize the image number and taxa number from each source in \autoref{fig:training-data}. We also report the distinct taxa and image number provided by top publishers from GBIF in \autoref{tab:gbif-taxa-compare}.
We provide the source code for this part in \href{https://github.com/Imageomics/TaxonoPy}{\texttt{TaxonoPy}}.

\begin{table}[t]
    \vspace{-4pt}
    \centering
    \small
    \caption{\small Top taxa contributors in GBIF~\cite{GBIF-DOI}. Here, ``Distinct Taxa'' is used to describe 7-rank taxonomic labels only provided to GBIF by that publisher, while ``Total Taxa'' is the total number of unique taxa labels within that publisher. Observe that, following iNaturalist~\cite{iNat-research-grade}, 
    the next three most diverse sources are museums, which, together, account for more distinct taxa than iNaturalist alone, despite having significantly fewer images overall.}
    
    \label{tab:gbif-taxa-compare}
    \vspace{4pt}
    \begin{tabular}{@{}lRRRR@{}}
        \toprule
        \thead{Publisher (GBIF Data Source)} & \thead{\text{Distinct Taxa}} & \thead{\text{Total Taxa}} & \thead{\text{Images}} & \thead{\text{\% GBIF Taxa}} \\
        \midrule
        iNaturalist.org & 81,671 & 350,229 & 134,877,019 & 9.07\% \\
        Natural History Museum & 38,535 & 148,637 & 3,603,256 & 4.28\% \\
        Museum national d'Histoire nat. %urelle 
        & 37,201 & 258,766 & 6,185,477 & 4.13\% \\
        Naturalis Biodiversity Center & 25,421 & 244,697 & 5,282,428 & 2.82\% \\
        \bottomrule
    \end{tabular}
    \vspace{-8pt}
\end{table}

As claimed in \S\ref{sec:taxa-coverage}, not all taxa are specific to the species level. We summarize the detailed taxa distribution in \autoref{tab:appendix-taxa-distribution}. Although some ambiguous labels were mapped up to higher taxonomic ranks, our overall dataset still predominantly comprises images confidently labeled at the species level. Specifically, out of the total 214M images, approximately 92.26\% retain species-level labels.  

\begin{table}[t!]
    \vspace{-4pt}
    \centering
    \small
    \caption{\small The distribution of unique taxa and the covered images at different taxonomic ranks in \OursD. $92.26\%$ images in \OursD have taxonomic labels specific to the species level.}
    \label{tab:appendix-taxa-distribution}
    \vspace{4pt}
    \begin{tabular}{@{}lRRR@{}}
        \toprule
        \thead{Rank} & \thead{\text{Total Images}} & \thead{\text{Unique Taxa}} & \thead{\text{Percent of Dataset}} \\
        \midrule
        Species & 197{,}382{,}628 & 867{,}455 & 92.26\% \\
        Genus & 206{,}160{,}396 & 135{,}380 & 96.36\% \\
        Family & 207{,}489{,}189 & 13{,}790 & 96.99\% \\
        Order & 210{,}063{,}485 & 1{,}683 & 98.19\% \\
        Class & 211{,}236{,}966 & 382 & 98.74\% \\
        Phylum & 212{,}416{,}362 & 127 & 99.29\% \\
        Kingdom & 213{,}932{,}022 & 11 & 100.00\% \\
        \bottomrule
    \end{tabular}
    \vspace{-8pt}
\end{table}

\subsection{Image-quality Screens}\label{subsec:quality-control}
At download, all images were checked to be sufficiently large ($224$ pixels or more on the shortest side), and resized, where needed, so that the largest side does not exceed $720$ pixels.
The code, support set class embeddings, and further details of the image-quality screens and the subsequent duplicate control are provided in our dataset repository, \href{https://github.com/Imageomics/TreeOfLife-toolbox}{\texttt{TreeOfLife-toolbox}}.

\subsubsection{Museum Specimen Processing}

Museums often consider multiple specimens of the same species to be connected instead of differentiating them (as they are often collected from the same or similar location). Thus, these occurrences often include images of their metadata, duplicates of the same or different specimens under one occurrence ID, or less informative images. Some common complications to consider when working with museum specimen images that influenced our processing are that
\begin{enumerate}
    \item Plant and fungi specimens may be stored in envelopes or folders. These are often photographed and digitized under the same occurrence ID as the image of the specimen itself, thus creating extraneous images we do not want to include in training. They also sometimes pair this with living specimen images (which would be worth keeping as well). See \autoref{fig:fungus-folder}, which demonstrates variety within a single occurrence for fungi specimens. Plant specimens have a secondary confounding factor in that they are often pressed for preservation with their metadata on the page (see \autoref{fig:plant-ex}).
    
    \item Fish, worms, and similar will be stored in jars. A single jar is considered an occurrence, so only one specimen may be photographed---perhaps at multiple angles---the jar may be photographed, multiple specimens photographed, etc. In any of these cases, the images will all be labeled with the same metadata that does not include this context. 
    
    \item For animal specimens, both of the above cases may occur: there may be simply a close-up of the tag (similar issue to envelopes with plants and fungi). There may also be multiple specimens photographed within the same occurrence (as noted with 2 about fish and worms). Museums often consider multiple specimens of the same species to be connected instead of differentiating them. This is two-fold, in that they are generally, collected from the same or similar location at or about the same time, and a specimen is kept as a representative of its species, so there is not a clear need for distinguishing between them. We also see multiple views of the same specimen within an occurrence. Examples in \autoref{fig:amph-ex}.
\end{enumerate}

%%% Fungus and folders ex
\begin{figure}[t]
\centering
\begin{subfigure}[c]{0.4\textwidth}
    \includegraphics[width=\linewidth]{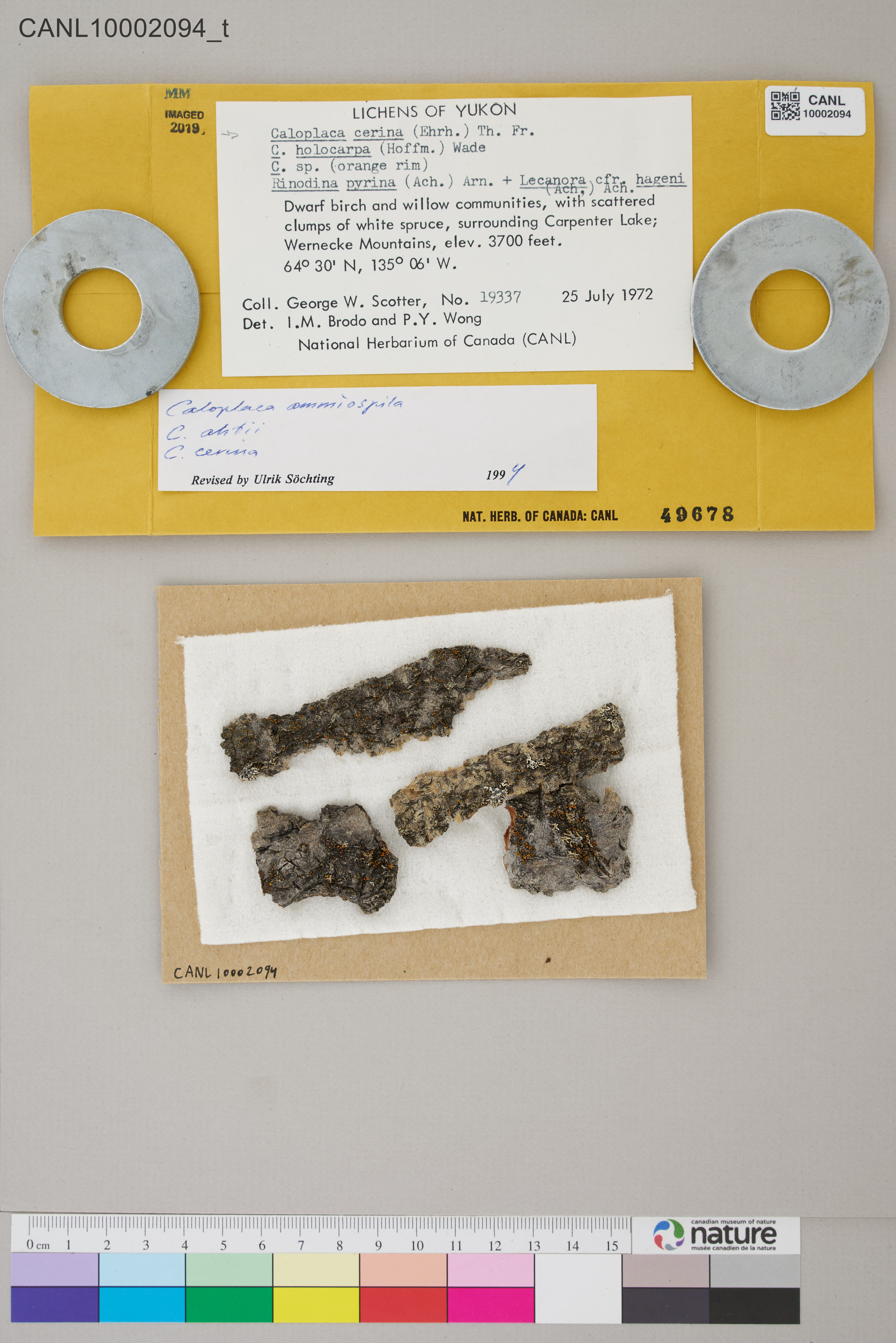}
    \caption{}
\end{subfigure}
\hfill
\begin{subfigure}[c]{0.4\textwidth}
    \includegraphics[width=\linewidth]{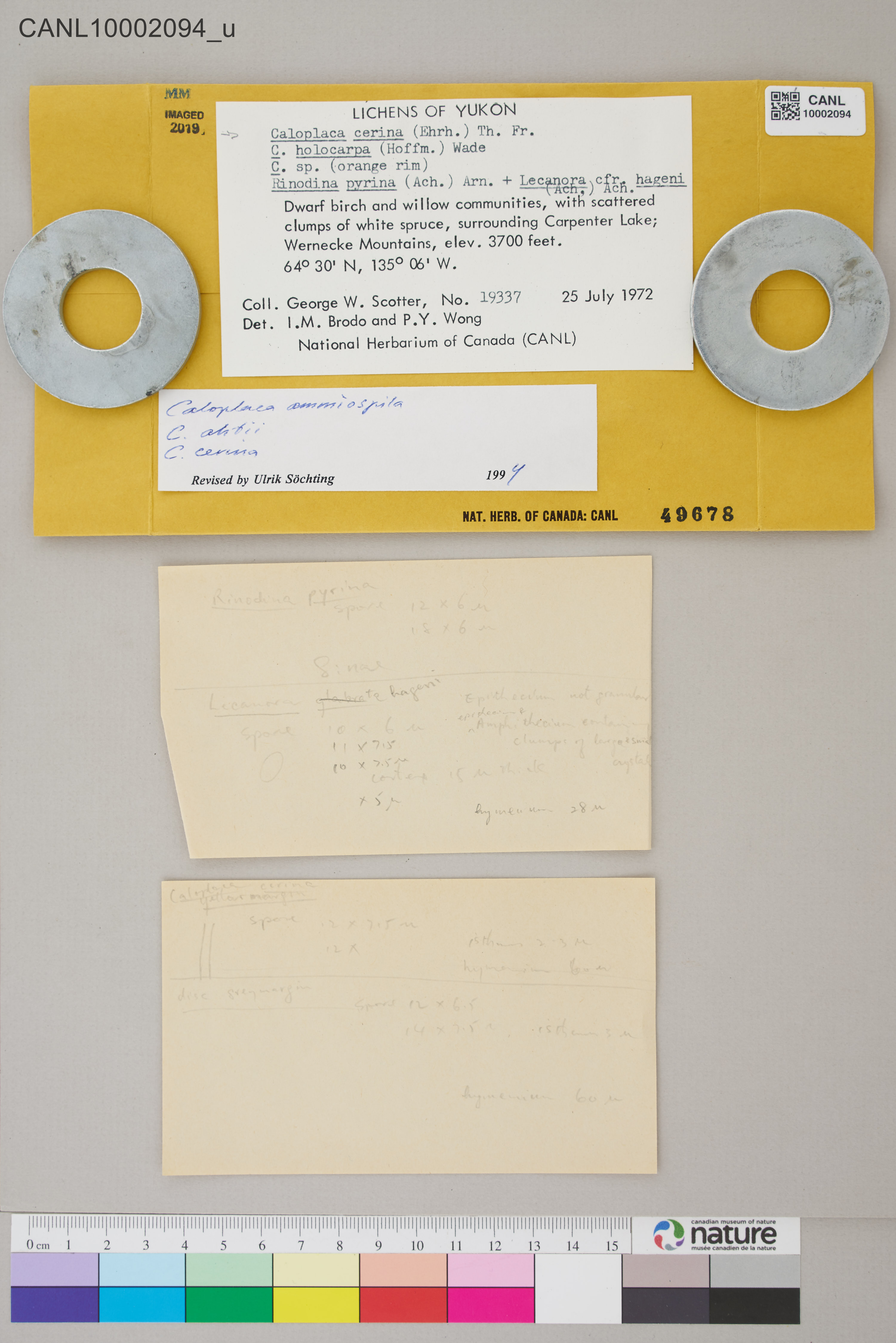}
    \caption{}
\end{subfigure}
\caption{\small These images all belong to \href{https://www.gbif.org/occurrence/2301948912}{GBIF Occurrence 2301948912}~\cite{fungi-imgs}, and thus share metadata. (a) Contains the specimen (\textit{Caloplaca cerina}), (b) contains the file metadata from the specimen folder; only (a) should be retained. Collected in Canada by \copyright Canadian Museum of Nature (licensed under \href{http://creativecommons.org/licenses/by-nc/4.0/}{CC BY-NC 4.0}).}
\label{fig:fungus-folder}
\vspace{-8pt}
\end{figure}

%%% Plant ex
\begin{figure}[t]
\centering
\begin{subfigure}[c]{0.4\textwidth}
    \includegraphics[width=\linewidth]{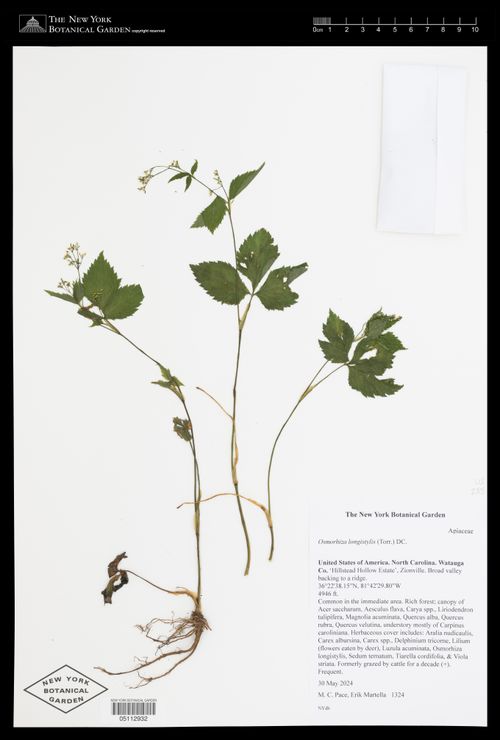}
    \caption{}
\end{subfigure}
\hfill
\begin{subfigure}[c]{0.4\textwidth}
    \includegraphics[width=\linewidth]{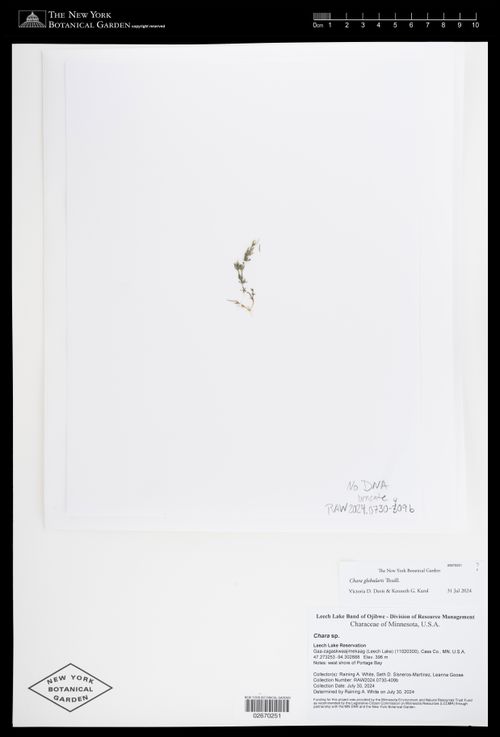}
    \caption{}
\end{subfigure}
% \hfill
% \vspace{-4pt}
\caption{\small Both images contain specimens included in the dataset, but their dominance in-frame is quite different. The pressed flower sheets are the same size, but (a) \textit{Osmorhiza longistylis} is dominant in the frame, while (b) \textit{Chara globularis} covers less pixel area than the text. These images belong to 
 \href{https://www.gbif.org/occurrence/5132787320}{GBIF Occurrence 5132787320} and \href{https://www.gbif.org/occurrence/5135831095}{GBIF Occurrence 5135831095}~\cite{fungi-imgs}, respectively, and were both collected in the United States of America by The New York Botanical Garden (licensed under \href{http://creativecommons.org/licenses/by/4.0/}{CC BY 4.0}).}
\label{fig:plant-ex}
% \vspace{-4pt}
\end{figure}

%% Specimen views and tag ex
\begin{figure}[t]
\centering
\begin{subfigure}[c]{0.24\textwidth}
    \includegraphics[width=\linewidth]{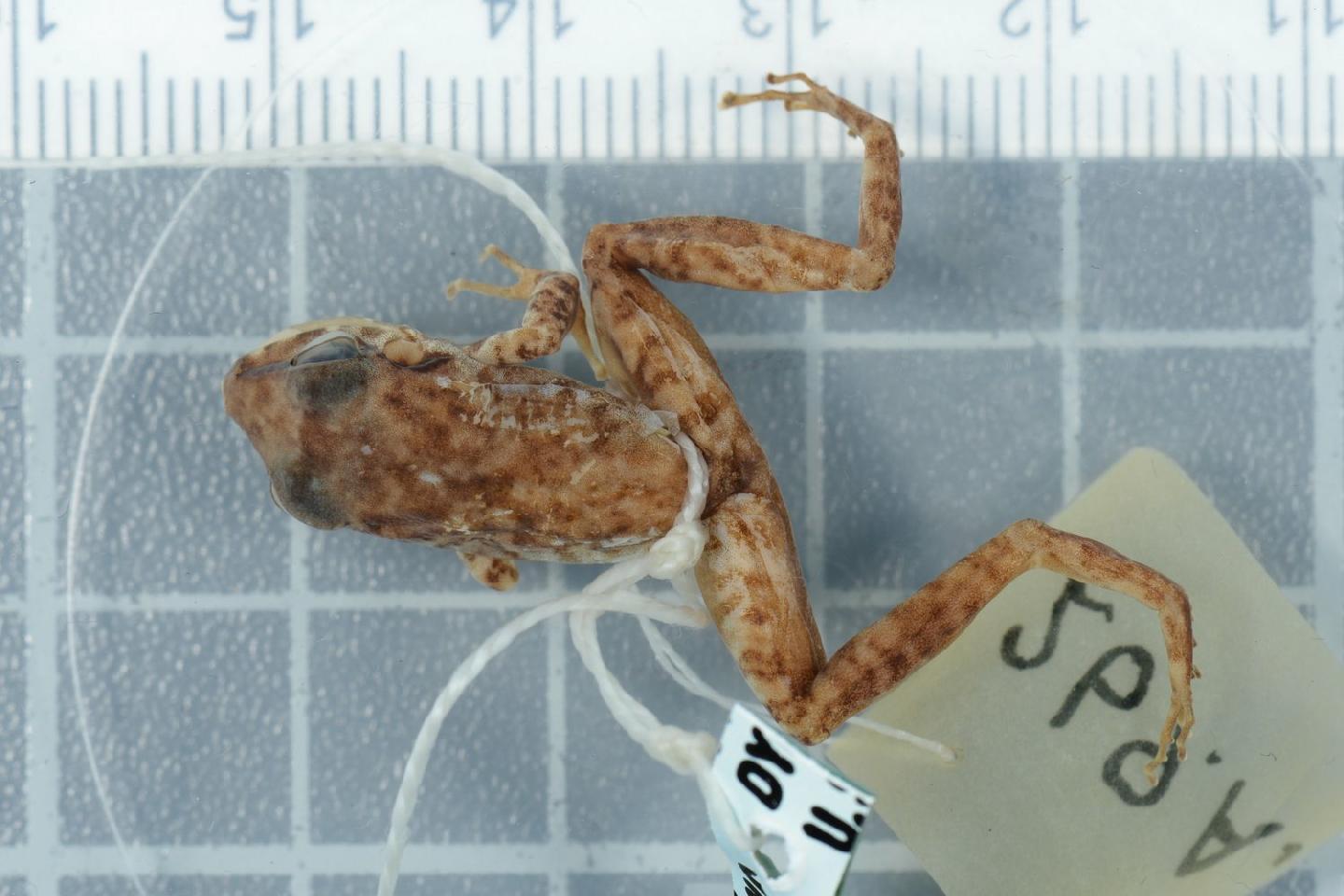}
    \caption{}
\end{subfigure}
\hfill
\begin{subfigure}[c]{0.24\textwidth}
    \includegraphics[width=\linewidth]{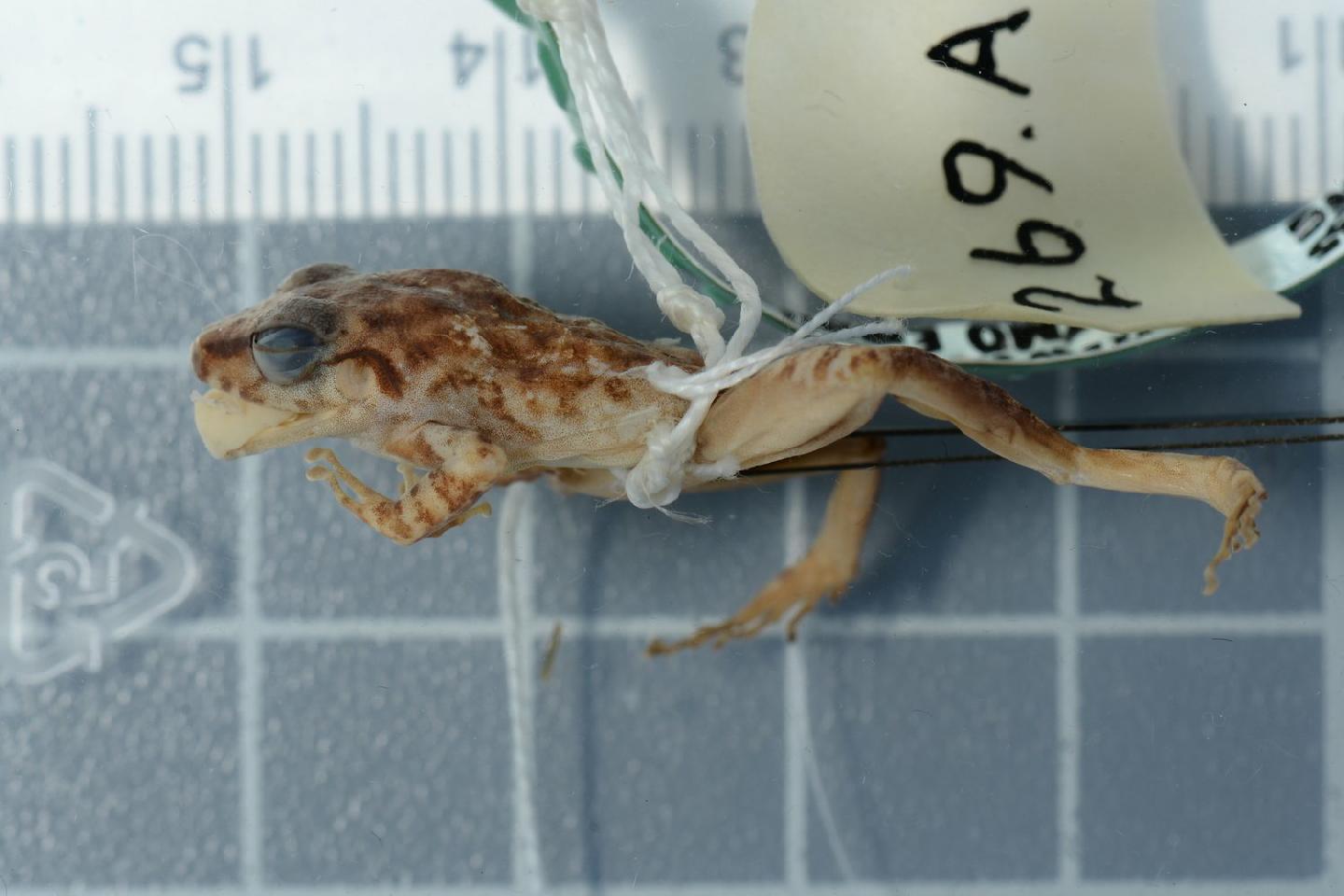}
    \caption{}
\end{subfigure}
\hfill
\begin{subfigure}[c]{0.24\textwidth}
    \includegraphics[width=\linewidth]{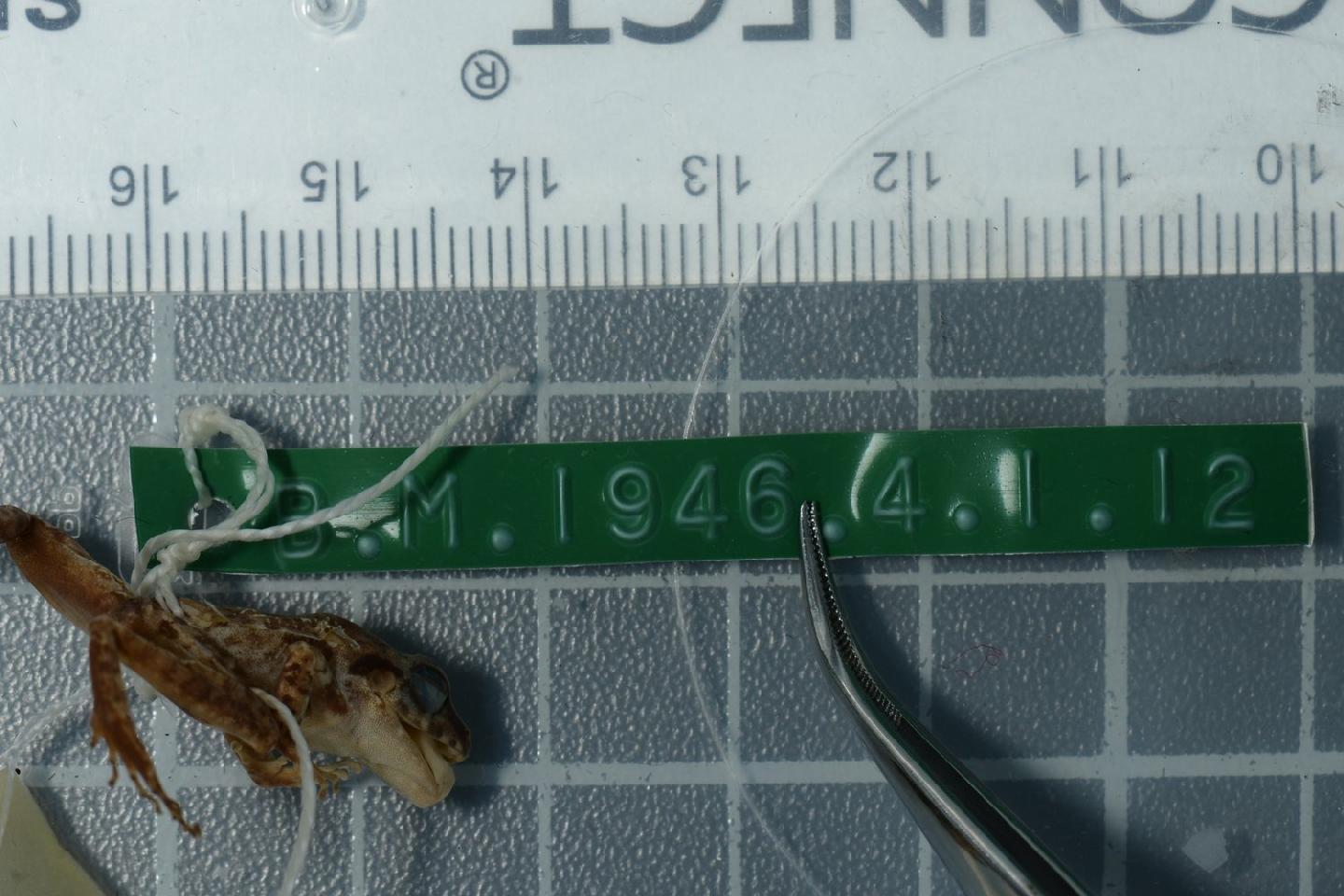}
    \caption{}
\end{subfigure}
\begin{subfigure}[c]{0.24\textwidth}
    \includegraphics[width=\linewidth]{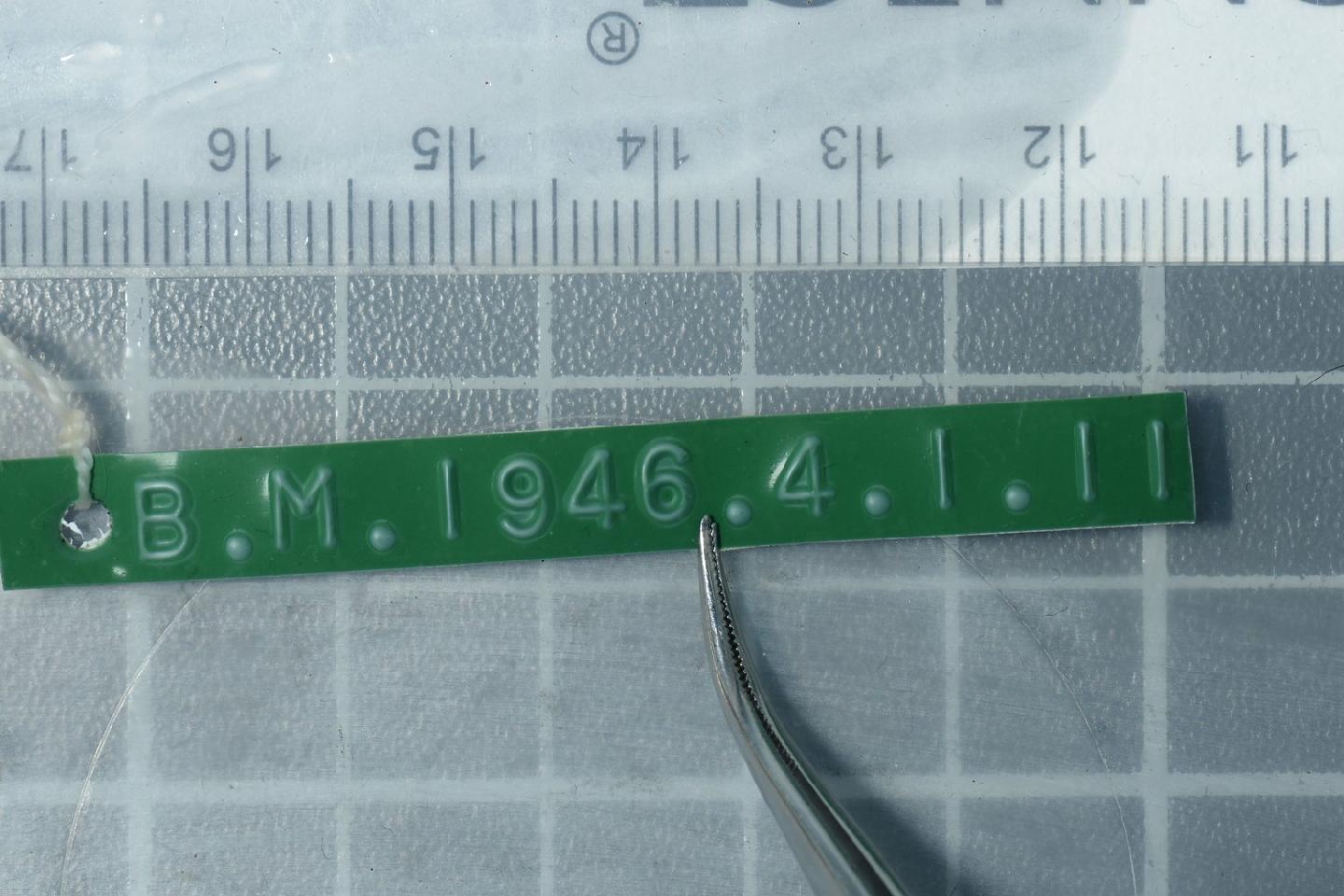}
    \caption{}
\end{subfigure}
% \vspace{-4pt}
\caption{\small These images all belong to 
 \href{https://www.gbif.org/occurrence/1056329684}{GBIF Occurrence 1056329684}~\cite{amphib-specimen}, and thus share metadata. (a) through (c) all contain the specimen (\textit{Pristimantis zeuctotylus}), though (c) is primarily focused on its label or tag. Meanwhile, (d) is the label for another specimen of the same species from the collection; there are multiple views of this specimen as well. (c) provides an alternate view and can be retained, while (d) should be removed.}
\label{fig:amph-ex}
\vspace{-8pt}
\end{figure}

In order to appropriately separate these images, we treated them as museums do, specifically by first dividing them into 11 collection areas (Fungi, Insect, Invertebrate Zoology, Microbiology, Plant, Uncategorized, and five classes of Vertebrate Zoology: Amphibians, Birds, Fishes, Mammals, Others) inspired by the \href{https://collections.nmnh.si.edu/search/}{Smithsonian Institution's categorical subdivisions} for their biological museum collections. From here, we further divided each category based on its image type (e.g., fossil or preserved specimen, as specified in GBIF metadata).

For each museum specimen category, we manually curated a small “support set” of representative specimen and non‐specimen images. We embedded these examples using CLIP (ViT-L/14@336px)~\cite{lam2023learning}; we chose not to use \OursP since museum specimen labels were not filtered from its training data (e.g., EOL contains many museum specimen images). To capture intra‐class diversity, we ran K-Means on each support set and retained the resulting cluster centers. During classification, we L2-normalized both the input image’s embedding and each center, then assigned the image to the nearest center in Euclidean space. 
This processing was applied to all museum specimen images identified within GBIF. The support set embeddings are included in \href{https://github.com/Imageomics/TreeOfLife-toolbox}{\texttt{TreeOfLife-toolbox}}.

\subsubsection{Camera Trap Images}

Some occurrences include large-volume camera-trap sequences, with up to $10{,}000$ images. These occurrences have a single taxonomic label applied across all images, though there are different taxa (\eg, the first few frames may have a duck, while later images have a swan, another a goose, but the label for all is a duck). 
To reduce the risk of introducing such noise while still capturing relevant biodiversity, we filter the dataset to include only occurrences with 15 images or fewer. We then use MegaDetector~\cite{beery2019efficient,hernandez2024pytorchwildlife} to filter ``empty'' frames. In creating a dataset to train a foundation model on the entire tree of life, it is prudent to avoid introducing too many plant images labeled as animals.

\subsubsection{Citizen Science Images}
Similarly, citizen science occurrences may contain many images assigned a single label. For each occurrence, we thus embedded the images using \OursP~\cite{stevens2024bioclip}, standardized them, calculated the pair-wise cosine similarity, and used the mean to indicate the occurrence's ``distinctness''. In these instances, we identified three broad categories 
into which to further 
subdivide them for reduction:
\begin{enumerate}
    \item Mixed occurrences: multiple species all labeled as just one of them (low similarity). These were treated as noise and discarded.
    
    \item Multiple image occurrences of the same species: observations suggest these are creature close-ups and environment images under a single occurrence. They may also be larger groups with close-ups. For these, we randomly sample down to 5 images.
    
    \item Camera trap images: likely images collected in backyards (high similarity). Occasionally, people upload images from camera traps to citizen science platforms like iNaturualist (also ex: \href{https://www.gbif.org/dataset/8eafd1af-58c2-49cf-b093-6796b9b38c8e}{MammalWeb}, a citizen science platform for camera trap images). These are processed under camera trap protocols described above.
\end{enumerate}

The final step in our quality control processing pipeline was to filter out identifiable images of people from the training data. This was done by running MTCNN~\cite{zhang2016joint} on all images downloaded from GBIF and EOL.

\subsection{Duplicate and Leakage Control}
GBIF and EOL are large biodiversity data aggregators, sourcing images that are also used in biological benchmarks such as iNat21~\cite{inat2021} and NeWT~\cite{van2021benchmarking}. Due to metadata provenance complications, the images in these test sets cannot easily be matched to their sources or copies downloaded from other sources. To prevent the introduction of test data sourced from iNaturalist or ``other sources online'', we perform two content deduplication steps. The first is to run MD5 hashes on all downloaded images. During our download, these hash sums are used to distinguish images that have already been downloaded. We record the hash of both the original image and our resized image. These were used to ensure that Rare Species~\cite{rare_species_2023} images (sourced from EOL) were not included in the training data.

Traditional hash sums are highly sensitive to small changes--a one-pixel difference between two images will produce a different hash. Hence, we applied perceptual hashing \cite[PDQ][]{facebook2019pdq}, with distance less than $10$, to identify training images that may be in our desired test sets that could not otherwise be filtered out (\ie, by MD5 hash sum or through metadata). Note that PDQ hash evaluation was only run on GBIF citizen science images and EOL images sourced from Flickr since they are not reliable for museum specimens.

\mypara{IUCN Red List coverage.}
According to the most recent IUCN Red List assessment\cite{iucn2025-search}, \OursD contains images of $69.5\%$ ($55{,}512$) of all IUCN-assessed species in categories characterized as rare species or data deficient. This coverage was determined by applying our taxonomic alignment package to the IUCN taxonomic data to enable direct comparison with our standardized dataset. \OursD demonstrates particularly strong representation of threatened species, with $77.1\%$ coverage across threatened categories ($36{,}370$ species), including $79.7\%$ of Vulnerable species ($14{,}038$), $79.4\%$ of Endangered species ($15{,}190$), and $68.4\%$ of Critically Endangered species ($7{,}142$). Coverage extends to $81.7\%$ of Near Threatened species ($8{,}073$) and $82.7\%$ of species classified as Extinct in the Wild ($67$). Data-deficient species have a lower representation at $48.4\%$ ($11{,}002$), likely reflecting the challenges in imaging and identifying the species in this group. Notably, these species are designated in this category because there is not sufficient information about them for IUCN to evaluate their status; only $8\%$ of the $2.14$M described species have been evaluated~\cite{iucn2025-count}. Thus, including $48\%$ of these species in \OursD, along with the threatened species coverage, establishes the approach to integrating diverse data sources used in \OursD as a valuable resource for conservation research, providing visual representation for a substantial majority of species prioritized for global conservation action.

\section{Author Contribution Statement}\label{sec:appendix-contribution}
\textbf{Jianyang Gu} led the research project, ran the experiments, conducted analyses, and wrote the major part of the manuscript. For the dataset processing, he built the face detection pipeline to remove images with recognizable/identifiable people. He also retrieved the common names from GBIF backbone taxonomy alignment to \texttt{TaxonoPy} output.

\textbf{Samuel Stevens} constructed the initial benchmark for non-species classification tasks and significantly contributed to the paper writing. In addition, he also developed the content-based de-duplication pipeline based on PDQ-hash to identify test images that were contained in training data. 

\textbf{Elizabeth G Campolongo} analyzed, evaluated, planned, organized, and documented the dataset effort.
She developed the data processing approach by categories and supervised the tool development to retrieve and organize the dataset onto the HPC file system and of dataset curation. 
    She curated camera trap test sets for benchmarking and Darwin’s Finches for embedding analysis. She also contributed significantly to the dataset part of the paper writing. 

\textbf{Matthew J Thompson} planned, organized, and supervised the development of tools to retrieve and organize the dataset onto the HPC file system.
He developed and executed the taxonomic standardization (\texttt{TaxonoPy}) and webdataset ML-ready format conversion. He also contributed to the paper writing of the dataset part. 

\textbf{Net Zhang} played an essential role in cleaning the noisy data. He dealt with the museum specimen images, citizen science images, and data de-duplication. Besides, he also processed and analyzed the downloaded data, primarily focusing on GBIF. 

\textbf{Jiaman Wu} provided detailed advice on designing the training experiments for \Ours. She also contributed to the dataset processing by initiating specimen label filtering and informing the taxonomic standardization for hemihomonym cases. 

\textbf{Andrei Kopanev} developed the \texttt{distributed-downloader} tool to retrieve and organize the original images in the HPC file system and converted the dataset into ML-ready format (webdataset).

\textbf{Zheda Mai} provided constructive advice on designing the experience replay of CLIP training data.

\textbf{Alexander E. White}
provided expertise in consultation on the museum specimen image processing approach.

\textbf{James Balhoff} and \textbf{Wasila Dahdul}
provided expertise in consultation on the algorithm for taxonomic hierarchy resolution relevant to \texttt{TaxonoPy} development.

\textbf{Daniel Rubenstein} and \textbf{Hilmar Lapp} were involved in regular meetings and gave valuable feedback on results and experiments.

\textbf{Tanya Berger-Wolf} and \textbf{Wei-Lun Chao} provided insightful discussions and directions throughout the development of the project. They offered constructive advice for both the training design and the model evaluation. They also provided detailed comments on the manuscript. 

\textbf{Yu Su} is the senior lead that oversaw the project and was involved in every aspect. He conceived the idea of observing emergent properties from scaling hierarchical contrastive training. He gave insightful guidance in formalizing the research question and directions to verify the new capabilities of \Ours. He provided critical feedback and helped shape the manuscript.

% %%%%%%%%%%%%%%%%%%%%%%%%%%%%%%%%%%%%%%%%%%%%%%%%%%%%%%%%%%%%

\end{document}